\documentclass[11pt]{article}\usepackage{graphicx} 
\usepackage{fullpage}

\usepackage{hyperref}
\usepackage[utf8]{inputenc} 
\usepackage[T1]{fontenc}    
\usepackage{url}            
\usepackage{mathtools}
\usepackage{booktabs}       
\usepackage{amsfonts}       
\usepackage{nicefrac}       
\usepackage{microtype}      
\usepackage{xcolor}         
\usepackage{MnSymbol}
\usepackage{xspace}

\usepackage{tikz}
\usetikzlibrary{decorations.pathreplacing}

\usepackage{pict2e,picture,graphicx}
\usepackage[overload]{empheq}
\usepackage{microtype}
\usepackage{graphicx}
\usepackage{subcaption}
\usepackage{booktabs}
\usepackage{amsmath}
\usepackage{cases}
\usepackage{mathtools}
\usepackage{amsthm}
\usepackage{thm-restate}
\usepackage{enumitem}

\allowdisplaybreaks
\usepackage{xparse}
\usepackage[capitalize, noabbrev]{cleveref}
\usepackage{wrapfig}
\usepackage{bbm}
\usepackage{yfonts}
\usepackage{nicefrac} 
\usepackage{multirow}
\usepackage{bm}
\usepackage{bbm}
\usepackage{subcaption}

\hypersetup{
	colorlinks = true,
	linkcolor = black,
	citecolor = black,
	linktocpage = true,
	urlcolor = darkGreen
}
\usepackage{comment}

\theoremstyle{plain}
\newtheorem{theorem}{Theorem}[section]

\newtheorem{lemma}[theorem]{Lemma}

\newtheorem{corollary}[theorem]{Corollary}
\newtheorem*{theorem-non}{Theorem}
\newtheorem*{corollary-non}{Corollary}

\theoremstyle{definition}
\newtheorem{definition}[theorem]{Definition}

\theoremstyle{remark}

	\definecolor{niceRed}{HTML}{BE2626}
	\definecolor{Red2}{HTML}{DB3236}
	\definecolor{mgreen}{HTML}{A0C88C}
	\definecolor{blueGrotto}{HTML}{059DC0}
	\definecolor{limeGreen}{HTML}{81B622}
	\definecolor{myellow}{HTML}{E09B23}
	\definecolor{darkGreen}{HTML}{2E8B57}
	\definecolor{navyBlueP}{HTML}{03468F}
	\definecolor{Sepia}{HTML}{7F462C}
	\definecolor{orange2}{HTML}{FF8000}
	\definecolor{mgray}{HTML}{ABB3B8}
	\definecolor{lgray}{HTML}{E5E8E9}
	\definecolor{myPurple}{HTML}{AF007C}
	\definecolor{mypurple2}{HTML}{CC9EFF}
	\definecolor{royalBlue}{HTML}{057DCD}
	\definecolor{mpink}{HTML}{FC6C85}
	\definecolor{lblue}{HTML}{4A90E2}
	\definecolor{peagreen}{HTML}{98C127}
	\definecolor{typnavy}{HTML}{001F3F}
	\definecolor{typblue}{HTML}{0074D9}
	\definecolor{typaqua}{HTML}{7FDBFF}
	\definecolor{typteal}{HTML}{39CCCC}
	\definecolor{typeastern}{HTML}{239DAD}
	\definecolor{typpurple}{HTML}{B10DC9}
	\definecolor{typfuchsia}{HTML}{F012BE}
	\definecolor{typmaroon}{HTML}{85144B}
	\definecolor{typred}{HTML}{FF4136}
	\definecolor{typorange}{HTML}{FF851B}
	\definecolor{typyellow}{HTML}{FFDC00}
	\definecolor{typolive}{HTML}{3D9970}
	\definecolor{typgreen}{HTML}{2ECC40}
	\definecolor{typlime}{HTML}{01FF70}
	\definecolor{newgreen}{HTML}{83C702}
	\definecolor{mathchaYellow}{HTML}{F8E71C}
	\definecolor{mathchaGreen}{HTML}{7ED321}
	\definecolor{mathchaPurple}{HTML}{BD10E0}
	\definecolor{mathchaBlue}{HTML}{58B1FF}
	\definecolor{mathchaRed}{HTML}{FF5B59}
	\definecolor{metablue}{HTML}{0064E0}
\newcommand{\cD}{\mathcal{D}}
\newcommand{\cG}{\mathcal{G}}

\usepackage{enumitem}
\usepackage[style=alphabetic,natbib=true,maxcitenames=2, maxbibnames=10,backend=bibtex]{biblatex}
\allowdisplaybreaks

\usepackage{algorithm}
\usepackage{algpseudocode}
\usepackage{bbm}
\usetikzlibrary{decorations.markings}

\allowdisplaybreaks

\algnewcommand{\Output}[1]{%
  \Statex \hspace*{-\algorithmicindent}\hspace{-0.5mm}\textbf{Output:} #1%
}

\usepackage{cleveref}

\usepackage[dvipsnames]{xcolor}

\newcommand{\OPT}{\textsf{OPT}\xspace}

\usepackage[english]{babel}

\title{Breaking the $T^{3/4}$ Barrier for Regret Minimization With Bi-Dimensional CDFs }

 \author{
 Matteo Castiglioni  \quad 
Anna Lunghi \quad
Alberto Marchesi\vspace{6mm}\\
Politecnico di Milano \\\vspace{6mm}
{\textcolor{black}{\small\texttt{name.surname@polimi.it} }}}

\date{}

\begin{document}

\maketitle

\begin{abstract}

We study regret minimization for learning CDF-related objectives of the form
\[
g(x)\cdot\mathbb{P}_{X\sim\mathcal{D}}(X\le x),
\]
over $[0,1]^2$, where $g$ is a known Lipschitz function and $\mathcal{D}$ is an unknown distribution. At each round $t$, the learner selects a point $x_t$ and observes the binary feedback $\mathbb{I}(X_t\le x_t)$, where $X_t\sim\mathcal{D}$. We design an algorithm achieving regret $\widetilde{\mathcal{O}}(T^{7/10})$, improving over the previous best-known bound of $\widetilde{\mathcal{O}}(T^{3/4})$ and showing that the curse of dimensionality can be at least partially lifted for this class of objectives, though a gap remains with the $\Omega(T^{2/3})$ lower bound. As an application, our techniques yield the same $\widetilde{\mathcal{O}}(T^{7/10})$ regret bound for profit maximization in repeated bilateral trade with fixed prices.

\end{abstract}


\section{Introduction}

We study regret minimization for learning cumulative distribution functions (CDFs) over bi-dimensional domains. In the general $d$-dimensional variant, the learner seeks to learn a function $r:[0,1]^d\to[-1,1]$ of the form
\[ r(x) \coloneq g(x) \cdot  \mathbb{P}_{X\sim \cD} (X\le x),\]
where $g:[0,1]^d\rightarrow [-1,1]$ is a known Lipschitz function, $X$ is drawn from an unknown distribution $\mathcal{D}$, and the inequality $X\le x$ holds component-wise. 
At each round $t\in[T]$, an independent sample $X_t\sim\mathcal{D}$ is
drawn, the learner selects a query point $x_t\in[0,1]^d$, and observes the single bit
$\mathbb{I}(X_t\le x_t)$. The goal is to
minimize the cumulative regret
\[R_T\coloneq \max_{x\in [0,1]^d} \mathbb{E}\left[\sum_{t\in[T]} \left(  r(x)- r(x_t) \right)\right].\] 

This formulation captures numerous economic settings. For instance, it models profit maximization, where the Lipschitz function $g(\cdot)$ represents the profit generated by a trade at a fixed set of prices, while the CDF characterizes the demand curve, i.e., the probability that buyers or sellers accept a trade at that price.

Our problem sits at the intersection of two lines of research. From an online learning perspective, the state of the art for problems of this type is given by (one-sided) Lipschitz bandits, which yield regret $\widetilde{\mathcal{O}}\left(T^{2/3}\right)$ in the univariate case, $\widetilde{\mathcal{O}}\left(T^{3/4}\right)$ in the bi-dimensional case, and $\widetilde{\mathcal{O}}\left(T^{\frac{d+1}{d+2}}\right)$ in general \citep{bubeck2008online,kleinberg2019bandits}. From a more economic perspective, our framework captures several natural learning problems for fixed-price mechanisms, including bilateral trade~\citep{cesa2021regret}, the joint ads model~\citep{aggarwal2024selling}, and small-market models~\citep{lunghi2025online,babaioff2024learning}. The canonical example is profit maximization with fixed-price mechanisms in bilateral trade, for which $\widetilde{\mathcal{O}}\left(T^{3/4}\right)$ is again the best known bound. A substantial line of work studies regret minimization in this setting~\citep{cesa2021regret,cesa2024bilateral,azar2022alpha, bernasconi2024no, lunghi2025better}; most of it addresses the problem of maximizing the gain from trade\footnote{\cite{di2025nearly,di2026profit} study profit maximization, but with general rather than fixed-price mechanisms.} --- a problem not subsumed by our framework --- and likewise achieves a matching $\widetilde{\mathcal{O}}(T^{3/4})$ bound.

All of these problems face a curse of dimensionality: the best known bi-dimensional bounds are worse than the $\widetilde{\mathcal{O}}\left(T^{2/3}\right)$ rate that is optimal in univariate settings like pricing \citep{kleinberg2003value}. In this paper, we focus on the bi-dimensional setting, $d=2$, and ask whether this curse is inherent to CDF-based objectives, or whether it can be at least partially lifted:

\begin{quote}
    Is it possible to improve over the $\widetilde{\mathcal{O}}\left(T^{3/4}\right)$ regret bound for profit maximization with fixed-price mechanisms, and more generally for CDF-based objectives, reducing the curse of dimensionality?
\end{quote}

A related result was recently established for the associated sample-complexity problem by \citet{castiglioni2026sample}, who show that uniformly estimating the CDF of a $d$-variate random variable on $[0,1]^d$ to precision $\epsilon$, over a grid of $K^d$ points, requires only $\frac{1}{\epsilon^3}\log(K)^{\mathcal{O}(d)}$ samples --- a bound that is essentially free of the curse of dimensionality, matching the optimal univariate rate up to polylogarithmic factors. Via a standard explore-then-commit reduction, this result already implies a $\widetilde{\mathcal{O}}\left(T^{3/4}\right)$ regret bound for our class of functions in any fixed dimension. While this improves on the naive bound obtained by running a standard regret minimizer over a uniform grid whenever $d\ge 3$, it falls short of breaking the $\widetilde{\mathcal{O}}\left(T^{3/4}\right)$ barrier, and remains far from the $\widetilde{\mathcal{O}}(T^{2/3})$ rate known to be tight in the univariate case. Dimension $d=2$ is, in this sense, the least satisfying case: it is the only one in which the algorithm of \citet{castiglioni2026sample} fails to improve at all over the trivial bound.

Whether the regret rate in multi-dimensional settings can be pushed below $\widetilde{\mathcal{O}}(T^{3/4})$ was left open. We take a first step toward closing the gap between the upper bound $\widetilde{\mathcal{O}}\left(T^{3/4}\right)$ and the lower bound $\Omega\left(T^{2/3}\right)$ by designing a new algorithm for $d=2$ that achieves regret $\widetilde{\mathcal{O}}\left(T^{7/10}\right)$, disproving the conjecture that explore-then-commit is optimal for every $d\ge 2$.
We view this result as opening more questions than it closes. Is $\widetilde{\mathcal{O}}\left(T^{7/10}\right)$ tight for $d=2$, or is $\widetilde{\mathcal{O}}\left(T^{2/3}\right)$ achievable? Can our approach be extended to arbitrary dimension, or does the $\widetilde{\mathcal{O}}\left(T^{3/4}\right)$ barrier become tight for, say, $d=3$? We leave these as natural directions for future work.

\subsection{Main Contributions}

Our main contribution is a new algorithm for regret minimization with Lipschitz, CDF-dependent rewards in two dimensions that, for the first time, narrows the gap between the $\tilde{\Omega}(T^{2/3})$ lower bound inherited from the univariate case \cite{kleinberg2003value,paseLeme} and the trivial $\tilde{O}(T^{3/4})$ upper bound achievable by running a standard multi-armed bandit algorithm over a uniform grid~\citep{bubeck2008online,kleinberg2019bandits}.
 
\begin{theorem-non}[Informal version of \Cref{thm:main}]
    There exists an algorithm that guarantees regret
    $\tilde{\mathcal{O}}(T^{7/10})$ for reward functions of the form
    $r(x)=g(x) \cdot  \mathbb{P}_{X\sim \cD} (X\le x)$ over $[0,1]^2$.
\end{theorem-non}

Crucially, this guarantee holds in the hardest possible stochastic setting: the learner receives only one bit of feedback per round ($\mathbb{I}(X_t\le x_t)$), and no assumption is made on the correlation or regularity of the underlying distribution $\mathcal{D}$. Since bilateral trade is a special case of our framework, this theorem immediately yields an analogous improvement for fixed-price profit maximization in bilateral trade.

\begin{corollary-non}[Informal version of \Cref{cor:bilateralTrade}]
 There exists an algorithm that guarantees regret
    $\tilde{\mathcal{O}}(T^{7/10})$ for profit maximization in repeated bilateral
    trade with fixed prices and one-bit feedback.
\end{corollary-non}
Analogous results hold for the less-studied settings with two buyers (and no sellers) or two sellers (and no buyers).
To the best of our knowledge, this is the first result to break the $\widetilde{\mathcal{O}}(T^{3/4})$ barrier for fixed-price mechanisms in two dimensions, and the first evidence that the curse of dimensionality afflicting CDF-based regret minimization problems might be avoidable --- much as it was shown to be absent in the sample-complexity setting by \citet{castiglioni2026sample}.

\subsection{Challenges and Techniques}
 
The starting point of our algorithm is the sample-complexity result of \citet{castiglioni2026sample}. However, achieving $o(T^{3/4})$ regret requires departing from the explore-then-commit framework intrinsic to that approach.

\paragraph{A Failed Approach: Uniform Approximation Over an Incomplete Grid.}
 
The natural way to extend a sample complexity result and obtain optimal bounds is the general template of \emph{successive arm elimination}. Time is partitioned into epochs of doubling length, and in epoch $k$ we invoke the uniform CDF-estimation guarantee \citep{castiglioni2026sample} at a target accuracy $\epsilon_k$ that shrinks geometrically with $k$, restricted to the current set of \emph{active} grid points $\mathcal{A}_k$ rather than the full domain, starting from a sufficiently fine uniform grid $\mathcal{A}_0$. At the end of epoch $k$, we eliminate every point whose upper confidence bound falls below the lower confidence bound of the best surviving point. Since every surviving point has true reward within $O(\epsilon_k)$ of optimal, and recalling that the uniform approximation procedure requires $\mathcal{O}(\epsilon_k^{-3})$ rounds (ignoring the size of the grid that appears only in polylogaritmic factors), the regret incurred during epoch $k$ is at most $O(\epsilon_k)$ times the length of the epoch, i.e., ${\mathcal{O}}(\epsilon_k^{-3})$. Since the final target accuracy is $\epsilon\approx T^{-1/3}$, summing over all epochs would yield a regret bound of $\widetilde{\mathcal{O}}(T^{2/3})$!

Unfortunately, we do not have such a uniform exploration procedure on arbitrary subsets: the procedure of \citet{castiglioni2026sample} works only on the full uniform grid, and extending it to an arbitrary subset of the grid is a non-trivial goal that we were not able to achieve. For this reason, we instead pursue a weaker result, relaxing the goal along several directions: we move from a logarithmic number of phases to a two-phase approach, we estimate differences between CDF values at pairs of points rather than the values themselves, and we compute such differences only between a restricted subset of point pairs.

\paragraph{Coarse Pruning via Uniform Approximation.} In the spirit of successive arm elimination by epochs, we run a single preliminary exploration round at coarse precision $\Delta$. Since the initial grid is the full grid, we can directly apply the uniform approximation result \citep{castiglioni2026sample}: this phase runs for $\tilde{\mathcal{O}}(\Delta^{-3})$ rounds and incurs regret at most $\tilde{\mathcal{O}}(\Delta^{-3})$, yielding a $\Delta$-approximation of the CDF uniformly over the whole grid.

\paragraph{Induced Graphs and Smoothing.}
 
As discussed above, we cannot restrict to the set of $\Delta$-optimal points and perform uniform exploration over them efficiently. We can, however, use the coarse estimates to classify grid points by their approximate reward. Points whose estimated reward is far from the estimated optimum are discarded outright, since querying them would be too costly in regret. Points whose estimated reward is close to the estimated optimum are retained as candidates, since they might still be the true optimum. This leaves a third, intermediate class: points that are provably suboptimal, but only mildly so.

The natural instinct is to discard this intermediate class as well, since keeping suboptimal points seems only to inflate the size of the working grid. This intuition is wrong. Ideally, we would like to operate on the whole uniform grid, whereas the candidate set produced by the first pass is typically a sparse and irregularly shaped subset of it. Retaining some points from the intermediate class lets us improve the regularity of this subset.

We formalize regularity through an induced graph: informally, two points are connected if there exists an L-shaped path between them --- a horizontal segment followed by a vertical one, or vice versa --- that lies entirely within the incomplete grid. We decide whether to keep or discard each intermediate point so as to smooth this graph and increase its connectivity; technically, we aim to minimize the independence number of the resulting graph.

\paragraph{Relative Learning on an Incomplete Grid.}
 
Rather than learning the CDF uniformly over the incomplete grid, we lower our ambition and instead learn the \emph{difference} in CDF value between connected points up to a precision $\epsilon$. Our procedure is inspired by \citet{castiglioni2026sample}, but requires new techniques.
First, since queries are restricted to the incomplete grid, the binary search underlying \cite{castiglioni2026sample} --- which bisects intervals at their geometric midpoint --- must be replaced with an index-based variant that only ever bisects at points present in the grid.
Second, we show that the difference between two \emph{arbitrary} grid points --- not just the distance from a point to the origin --- can still be decomposed into $\mathcal{O}(\log K)$ dyadic sub-intervals; this extends the decomposition of \citet{castiglioni2026sample}, which applies only to intervals anchored at the origin, to intervals anchored at two arbitrary endpoints. Whenever the entire interval lies within the incomplete grid, we can recover such an approximation. Combining a horizontal segment with a vertical one then lets us approximate the difference $D(x;y)$ between the CDFs of two connected points $x$ and $y$.

\paragraph{From Relative to Absolute CDF via Feedback Graphs with Misspecified Rewards.}
 
The relative estimates from the previous step almost, but not quite, place us in a standard bandit-with-feedback-graph setting: pulling an arm $x$ should reveal the reward of every neighboring arm $y$ in the induced graph, via the relation $r(y) =g(y)\left(\mathbb{P}_{X\sim \cD}(X\le x)+D(x;y)\right)$. 
The catch is that $D(x;y)$ is itself only known up to an additive error $\epsilon$, so each inferred reward is \emph{biased}, not exact --- a setting not covered by existing feedback-graph analyses.
 
We address this by introducing and analyzing a general model of multi-armed bandits with feedback graphs and $\epsilon$-misspecified rewards, in which every edge of the graph carries a relation between the rewards of its endpoints, known only up to a uniform $\epsilon$-approximation. We design a UCB-style algorithm for this setting and, proceeding along the lines of the standard analysis, show that its regret scales as $\tilde{\mathcal{O}}\left(\sqrt{\alpha(\mathcal{R})\,T} + \epsilon\,T\right)$, where $\alpha(\mathcal{R})$ is the independence number of the feedback graph. This recovers the standard bound of the classical (unbiased) feedback-graph setting, while the bias of the relative estimates contributes only an additional, separable $\mathcal{O}(\epsilon\,T)$ term to the regret. This result may be of independent interest beyond our application.

Balancing the coarse-pruning precision $\Delta$ against the target accuracy $\epsilon$ of the relative-learning phase completes the proof, yielding a total regret of $\widetilde{\mathcal{O}}\left(T^{7/10}\right)$.

\subsection{Related Works}
Here, we review the lines of research most closely connected to our work.
 
\paragraph{GFT Maximization in Bilateral Trade.} Bilateral trade is a classical mechanism design problem modeling the exchange of a good between a buyer and a seller through an intermediary. Motivated by the impossibility of designing a mechanism that is simultaneously incentive compatible, individually rational, fully efficient, and budget balanced~\cite{MyersonS83}, a recent line of research studies how to learn optimal mechanisms in repeated bilateral trade. \citet{cesa2021regret} initiated the study of regret minimization in this setting, focusing on the gain-from-trade (GFT) objective. A rich literature extends these results along many directions, providing a complete characterization of the setting~\cite{azar2022alpha,cesa2023repeated, bernasconi2024no,lunghi2025better,chen2025tight,gaucher2025featurebased,cosson2026contextual,coccianonparametric,bachoc2024fair}. Interestingly, most of these works establish $\widetilde{\mathcal{O}}\left(T^{3/4}\right)$ regret bounds, showing that gain-from-trade maximization also suffers from a curse of dimensionality. It is important to remark, however, that this arises for reasons quite different from ours, stemming from the absence of bandit feedback and from connections to the apple-tasting problem.
 
\paragraph{Profit Maximization in Bilateral Trade.} Until recently, the profit-maximization problem had received comparatively little attention. \citet{di2025nearly,di2026profit} study the problem of maximizing profit in repeated bilateral trade, but with general mechanisms. Their setting and results are conceptually quite different from ours: the main challenge in learning a fixed price is limited feedback, whereas with general mechanisms the learner has full feedback but a huge action set. Their main results establish a $\widetilde{\mathcal{O}}\left(\sqrt{T}\right)$ regret bound in the stochastic setting, and an analogous bound against a smoothed adversary, respectively.
 
\paragraph{CDF Learning and Pricing.} A parallel line of work studies the problem of estimating a CDF from limited feedback, motivated by applications to dynamic pricing. In the simpler dynamic pricing problem, the learner repeatedly posts a price and observes only whether a buyer accepts it, which is equivalent to querying the CDF of the buyer's value distribution at a single point per round. Most work on this topic is one-dimensional; the most relevant result for us is the characterization of the optimal sample-complexity bound $\Theta\left(\epsilon^{-3}\right)$ and the corresponding optimal regret bound $\Theta(T^{2/3})$~\cite{kleinberg2003value,paseLeme}. More recently, \citet{castiglioni2026sample} study the sample complexity of learning a CDF uniformly over an arbitrary fine grid, providing a bound of $\epsilon^{-3}\log(K)^{\mathcal{O}(d)}$ samples, where $d$ is the number of dimensions. This yields applications to pricing with multiple buyers and/or sellers, including bilateral trade. Their result can also be extended to a regret bound via a standard explore-then-commit approach, yielding $\widetilde{\mathcal{O}}(T^{3/4})$ regret --- the first non-trivial bound in this direction.

\section{Preliminaries}

We study regret minimization over the decision space \( [0,1]^2 \), where rewards are defined through the \emph{cumulative distribution function} (CDF) of an unknown probability distribution. Let \( \mathcal{D} \) be a probability distribution supported on \( [0,1]^2 \). 
Its CDF is the function $F:[0,1]^2\to[0,1]$ defined by
\[F(x) \coloneq \mathbb{P}_{X\sim \cD}(X\le x),\]
where the inequality holds component-wise.
Let \( g : [0,1]^2 \to [-1,1] \) be a \emph{known} \(1\)-Lipschitz function with respect to the \( \ell_1 \)-norm,\footnote{Notice that the assumption that the Lipschitz constant is equal to \(1\) is without loss of generality. All our results extend straightforwardly to arbitrary Lipschitz constants. Moreover, since all norms are equivalent up to constants depending only on $d$, and we work in fixed dimension $d=2$, our results extend straightforwardly to arbitrary norms, at the cost of only a multiplicative constant in the regret bound.} \emph{i.e.}, satisfying \( |g(x)-g(y)| \le \|x-y\|_1 \) for every \( x, y \in [0,1]^2 \).

The learner interacts with the environment over \( T \in \mathbb{N} \) rounds. At each round \( t \in [T] \),\footnote{We denote by \( [n] \coloneq \{1,\ldots,n\} \) the set of the first \( n \in \mathbb{N} \) natural numbers.} the learner selects a point \( x_t \in [0,1]^2 \), the environment independently draws a sample \( X_t \sim \mathcal{D} \), and the learner receives a reward \( r_t(x_t) \) determined by a reward function \( r_t : [0,1]^2 \to [-1,1] \) defined by:
\[r_t(x) \coloneq g(x) \cdot  \mathbb{I}(X_t\le x).\]
Finally, the learner observes the single bit:
\[\mathbb{I}(X_t\le x_t).\]
Note that, while \( g \) is known to the learner, the distribution \( \mathcal{D} \) is not. Crucially, the single observed bit \( \mathbb{I}(X_t \le x_t) \) contains enough information to learn \( F \), whereas it would be uninformative about \( g \).
The learner's goal is to minimize its \emph{(cumulative) regret} over the $T$ rounds, defined as:
\begin{equation*}
    R_T \coloneq \max_{x\in [0,1]^2} \mathbb{E}\left[\sum_{t=1}^T r_t(x)-\sum_{t=1}^T r_t(x_t)\right].
\end{equation*}

\subsection{Fixed-Price Mechanisms and Profit Maximization in Bilateral Trade}\label{sec:bilateral}

A canonical class of problems captured by our framework is profit maximization in repeated posted-price problems. In the classical model of \citet{kleinberg2003value}, the learner (seller) posts a price $p_t \in [0,1]$ to a single buyer, who accepts to buy if their valuation $V_t \in [0,1]$ exceeds $p_t$, yielding profit $p_t$ to the learner.
This is an instance of our framework with $x_t=1-p_t$, $g(x)=1-x$ and $X_t=1-V_t$, so that $g(x)=p_t$ and $\mathbb{I}(X_t \le x_t) = \mathbb{I}(V_t \ge p_t)$. It is well known that, in this one-dimensional problem, a minimax regret bound of \( \widetilde{\Theta}(T^{2/3}) \) can be achieved \citet{kleinberg2003value}.

A natural two-dimensional generalization of this problem arises in profit-maximization settings involving two agents (buyers and/or sellers), among which bilateral trade is perhaps the most prominent and influential model~\citep{cesa2021regret}.
At each round \( t \in [T] \), a new buyer and a new seller arrive, each with a private valuation for the indivisible item to be exchanged, denoted by \( B_t \in [0,1]\) and \( S_t\in [0,1] \), respectively. The learner then proposes two prices, \( p_t \in [0,1] \) to the seller and \( q_t \in [0,1] \) to the buyer. Both the seller and the buyer may decide whether to accept the proposed trade.
A trade occurs if and only if \( S_t \le p_t \) and \( B_t \ge q_t \), in which case the learner obtains profit \( q_t - p_t \). The goal of the learner is to maximize cumulative profit.

By setting 
\[X_t=(S_t,1-B_t), \quad x_t=(p_t,1-q_t), \quad \textnormal{and}\quad g(x_1,x_2)=1- x_1-x_2,
\]
we embed this problem in our framework, since  $g(x_t)=q_t-p_t$ and
$\mathbb{I}(X_t\le x_t)=\mathbb{I}(S_t\le p_t,\,B_t\ge q_t)$.

\subsection{Grids and Their Induced Graphs}

In this paper, we work extensively with uniform grids defined over $[0,1]^2$ and graphs induced by subsets thereof.
Given any $K \in \mathbb{N}$, we define the one-dimensional uniform grid
\[
G^U(K) \coloneq \left\{\frac{i}{K-1} \mid i \in \{0,\ldots, K-1\}\right\},
\] and the two-dimensional uniform grid
\[
\mathcal{G}^U(K) \coloneq G^U(K)\times G^U(K).
\]

We also work with segments connecting points on the grid $\cG^U(K)$. Given $a,b \in [0,1]$, for ease of notation we define the interval 
\[I(a,b)\coloneq ( \min\{a,b\},\max\{a,b\} ] \]
so that the notation is symmetric in $a$ and $b$, namely $I(a,b) = I(b,a)$.
Given two points $x = (x_1,x_2), y = (y_1,y_2) \in \mathcal{G}^U(K)$ that share one coordinate (\emph{i.e.}, either $x_1 = y_1$ or $x_2= y_2$), we define the segment connecting them as the set $\mathfrak{I}(x,y) \subseteq \mathcal{G}^U(K)$ of grid points lying on the
axis-parallel line connecting them:
\[
\mathfrak{I}(x,y) \coloneq \begin{cases} \left( x_1\times  I(x_2,y_2) \right) \cap  \cG^U(K) \quad \text{if } x_1=y_1 \\
 \left( I(x_1,y_1) \times x_2 \right) \cap  \cG^U(K) \quad \text{if }x_2=y_2, \end{cases}
\]
where we ignore the dependency on $K$ since it will be clear from context.
Notice that the segment $\mathfrak{I}(x,y)$ is well defined only when $x$ and $y$ share one coordinate.

Given two points $x=(x_1,x_2),y=(y_1,y_2) \in \cG^U(K)$, we define the set of ``corner-points'' as
\begin{align} \label{eq:corner}
T(x,y)\coloneq\{(x_1,y_2),(y_1,x_2)\},
\end{align}
each of which connects the two points through a triangular structure. 
Each element of $T(x,y)$ is axis-aligned with both $x$ and $y$, so that the two segments forming an L-shaped path from $x$ to $y$ through a corner $z\in T(x,y)$ are both well defined.

Then, we are finally ready to define the graph induced by a grid $\cG$: two points $x,y \in \cG$ are connected if there exists an L-shaped path through a corner $z\in T(x,y)$ which stays entirely within $\mathcal{G}$. Formally:
\begin{definition} \label{def:graph}
 Given a subset of the uniform grid $\cG \subseteq \cG^U(K)$, the induced graph $\mathcal{R}(\cG)$ has vertex set $\cG$ and contains an edge between $x \in \cG$ and $y \in \cG$ if and only if there is a corner $z \in T(x,y)$ such that:
 \[ \mathfrak{I}(x,z)\cup \mathfrak{I}(y,z)\subseteq \cG . \]
\end{definition}

\Cref{fig:induced-graph} illustrates several examples of edges and non-edges.
A key quantity throughout the paper is the \emph{independence number}
$\alpha(\mathcal{R}(\mathcal{G}))$, defined as the size of a largest independent set in
$\mathcal{R}(\mathcal{G})$.

\begin{figure}
\centering
\begin{tikzpicture}[scale=4]

\def\step{0.25}



\foreach \i in {0,1,2,3,4}{
    \draw[gray!20, thin] ({\i*\step},0) -- ({\i*\step},1);
    \draw[gray!20, thin] (0,{\i*\step}) -- (1,{\i*\step});
}

\def\missing{(0.25,0.5),(0.5,0.5),(0.75,0.5)}

\foreach \i in {0,1,2,3,4}{
    \foreach \j in {0,1,4}{
        \fill[black] ({\i*\step},{\j*\step}) circle (1.5pt);
    }
}

\fill[black] (0,0.5) circle (1.5pt);
\fill[black] (0,0.75) circle (1.5pt);
\fill[black] (0.5,0.75) circle (1.5pt);
\fill[black] (0.75,0.75) circle (1.5pt);
\fill[black] (1,0.75) circle (1.5pt);

\fill[gray!70,thin] (0.25,0.5) circle ((0.8pt);
\fill[gray!70,thin] (0.5,0.5) circle ((0.8pt);
\fill[gray!70,thin] (0.75,0.5) circle ((0.8pt);
\fill[gray!70,thin] (1,0.5) circle ((0.8pt);

\fill[gray!70,thin] (0.25,0.75) circle ((0.8pt);

\draw[green!60!black, thick] (0,0.25) -- (0.5,0);
 \fill[green!50, opacity=0.3] (0,0.25) -- (0.5,0) -- (0,0) -- cycle;
;

\draw[green!60!black, thick] (0,0.75) -- (0.25,1);
\fill[green!50, opacity=0.3] (0,0.75) -- (0,1)-- (0.25,1) -- cycle;

\draw[red!70, thick, dashed] (0,0.25) -- (0.5,0.75);

\draw[green!60!black, thick] (0.75,0) -- (0.75,0.25);

\draw[red!70, thick, dashed] (0,0.25) -- (1,0.75);

\foreach \i/\lab in {0/0, 1/$\frac{1}{K{-}1}$, 2/$\frac{2}{K{-}1}$, 3/$\frac{3}{K{-}1}$, 4/$1$}{
    \node[below, font=\tiny] at ({\i*\step}, -0.04) {\small\lab};
    \node[left,  font=\tiny] at (-0.04, {\i*\step}) {
    \small\lab};
}

\node[right, font=\scriptsize] at (1.05, 0.9) {\textbullet\ $\in \mathcal{G}$};
\node[right, font=\scriptsize, gray!60] at (1.05, 0.75) {\textbullet\ $\notin \mathcal{G}$};
\node[right, font=\scriptsize, green!60!black] at (1.05, 0.6) {--- edge in $\mathcal{R}(\mathcal{G})$};
\node[right, font=\scriptsize, red!70] at (1.05, 0.45) {- - no edge};

\draw[black, thin] (0,0) rectangle (1,1);

\end{tikzpicture}
\caption{Examples of existing and non-existing edges in the graph induced by a grid $\cG$.}\label{fig:induced-graph}
\end{figure}

\subsection{Basics on Probability Estimation and Uniform Learning}

We rely on two procedures that have been recently introduced in \citet{castiglioni2026sample}.
The first procedure, which we call $\textsf{Estimate}$ and which is called $\textsc{MCE}$ in \Citep{castiglioni2026sample}, estimates the probability mass of an axis-aligned rectangle.

\begin{restatable}{lemma}{lemmaCastiglioni}[Lemma 4.2 in \cite{castiglioni2026sample}]
\label{lem:Estimate}
    There exists an algorithm that, given a confidence $\delta\in(0,1)$ and an axis-aligned rectangle $(a,b]\times[0,w]\subseteq[0,1]^2$, uses at most $1/\epsilon^2$ rounds\footnote{Throughout the paper, we omit the ceiling operator in round numbers and assume that they are integer quantities.} and outputs $\widehat{P}((a,b]\times[0,w])$ such that, with probability at least $1-\delta$:
    \[
        \bigl|\widehat{P}((a,b]\times[0,w])
              -\mathbb{P}_{X\sim\mathcal{D}}(X\in(a,b]\times[0,w])\bigr|
        \;\le\;
        2\epsilon\sqrt{\tfrac{1}{2}\ln\tfrac{2}{\delta}}.
    \]
\end{restatable}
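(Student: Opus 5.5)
The plan is to reduce the estimate of the mass of the rectangle $(a,b]\times[0,w]$ to a single Bernoulli mean that can be sampled with one-bit feedback, and then apply Hoeffding's inequality. The key identity is
\[
\mathbb{P}_{X\sim\mathcal{D}}\bigl(X\in(a,b]\times[0,w]\bigr)=F(b,w)-F(a,w),
\]
which holds because $\{X\le (b,w)\}\setminus\{X\le (a,w)\}=\{X_1\in(a,b],\,X_2\le w\}$ and the first event contains the second. Each of the two CDF values can be estimated directly, since querying $x_t=(b,w)$ returns $\mathbb{I}(X_t\le(b,w))$, a Bernoulli variable with mean $F(b,w)$, and similarly for $(a,w)$.

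To get a single bounded estimator, use randomization. At each of the $n=1/\epsilon^2$ rounds, flip an independent fair coin $\sigma_t\in\{0,1\}$. If $\sigma_t=1$, query $(b,w)$ and set $Z_t=2\,\mathbb{I}(X_t\le(b,w))$. If $\sigma_t=0$, query $(a,w)$ and set $Z_t=-2\,\mathbb{I}(X_t\le(a,w))$. Because $X_t$ is drawn independently of the learner's choice, $\mathbb{E}[Z_t]=F(b,w)-F(a,w)$. The $Z_t$ are i.i.d.\ and take values in $[-2,2]$, an interval of length $4$.

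Set $\widehat P=\frac1n\sum_t Z_t$. Hoeffding's inequality then gives
\[
\mathbb{P}\bigl(|\widehat P-\mathbb{E}Z|\ge s\bigr)\le 2\exp\!\left(-\frac{2n s^2}{16}\right).
\]
Setting the right-hand side equal to $\delta$ yields $s=2\sqrt{\ln(2/\delta)/(2n)}=2\epsilon\sqrt{\tfrac12\ln\tfrac2\delta}$, which is exactly the claimed bound, using exactly $n=1/\epsilon^2$ rounds.

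The only real check is that the constants match. A deterministic alternative, spending $n/2$ rounds on each point and taking the difference of the two empirical means, gives the same variables with range $[0,2]$ after scaling by two. In that version Hoeffding is applied to the combined sum of $n$ terms, each with range $2$, for a deviation probability of $2\exp(-2n s^2/(n\cdot 4\cdot\ldots))$. I would fall back on it if the randomized constants came out off. I expect the randomized version to match the stated constant directly, so the main obstacle is only this bookkeeping, not any conceptual step.
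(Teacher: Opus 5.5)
Your estimator is exactly the paper's \textsc{Estimate}. The paper does not reprove the lemma: it cites it and reports this randomized pseudocode in the appendix. Your reduction $\mathbb{P}(X\in(a,b]\times[0,w])=F(b,w)-F(a,w)$ and the identity $\mathbb{E}[Z_t]=F(b,w)-F(a,w)$ are both fine.

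The gap is in the step you dismiss as bookkeeping. Solving $2\exp(-2ns^2/16)=\delta$ gives $s=\sqrt{8\ln(2/\delta)/n}=4\epsilon\sqrt{\tfrac12\ln\tfrac2\delta}$, not $2\epsilon\sqrt{\tfrac12\ln\tfrac2\delta}$. Your expression $2\sqrt{\ln(2/\delta)/(2n)}$ is what Hoeffding gives for variables of range $2$, not range $4$.

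This is not slack that a sharper inequality could remove. Take $\mathcal{D}$ to be a point mass at the origin. Then $F(a,w)=F(b,w)=1$, the rectangle has mass $0$, and $Z_t=\pm 2$ with probability $1/2$ each, so $\widehat P$ has standard deviation $2\epsilon$. By the central limit theorem, $\mathbb{P}\bigl(|\widehat P|>\epsilon\sqrt{2\ln(2/\delta)}\bigr)$ tends to the probability that a standard Gaussian exceeds $\sqrt{\ln(2/\delta)/2}$ in absolute value. That probability decays only like $\delta^{1/4}$ up to lower-order factors, and is about $0.10$ for $\delta=0.01$. So the randomized scheme with $1/\epsilon^2$ rounds proves the lemma only with the constant $4$ in place of $2$.

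The fix is the deterministic alternative you sketch but do not finish. Spend $n/2$ rounds at $(b,w)$ and $n/2$ at $(a,w)$, and output the difference of the two empirical means. This is a sum of $n$ independent terms $\pm\tfrac{2}{n}\,\mathbb{I}(\cdot)$, each with range $2/n$. Hoeffding then gives $\mathbb{P}\bigl(|\widehat P-\mathbb{E}\widehat P|\ge s\bigr)\le 2\exp\bigl(-2s^2/(n\cdot 4/n^2)\bigr)=2\exp(-ns^2/2)$. Setting this equal to $\delta$ with $n=1/\epsilon^2$ yields $s=\epsilon\sqrt{2\ln(2/\delta)}=2\epsilon\sqrt{\tfrac12\ln\tfrac2\delta}$, which is exactly the statement. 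Equivalently, this estimator has variance at most $\epsilon^2$, versus up to $4\epsilon^2$ for the randomized one. Since the lemma is an existence statement, this settles it.

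The randomized pseudocode the paper reports for \textsc{Estimate} has the same factor-$2$ discrepancy with the stated constant. This is harmless downstream, where only the order $\mathcal{O}(\epsilon\sqrt{\log(1/\delta)})$ is used.
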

The second procedure is the main result of \citep{castiglioni2026sample}: a uniform learning procedure, which we call \textsf{UniformCDF}, that simultaneously estimates the CDF at every point of a grid.
\begin{theorem}[Bi-dimensional Case of Theorem 3.1 in \Citep{castiglioni2026sample}]\label{theo: mainOLD}
    There exists an algorithm that, given an accuracy $\Delta > 0$, a confidence $\delta \in (0,1)$, and a uniform grid $\mathcal{G}^U(K)$ over $[0,1]^2$ of resolution $1 / K$ (with $K \in \mathbb{N}$), uses at most $\frac{1}{\Delta^3}\log(K/\delta)^{\mathcal{O}(1)}$ rounds and outputs a function $\widetilde P: \mathcal{G}^U(K) \to [0,1]$ such that, with probability at least $1-\delta$, the following holds:
      \[\left|\mathbb{P}_{X \sim \cD}(X \le x )-\widetilde{P}(x)\right|\le \Delta \quad \forall x \in \mathcal{G}^U(K).\]
\end{theorem}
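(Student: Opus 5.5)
The plan is to reduce the bi-dimensional uniform estimation problem to $\mathcal{O}(\log K)$ one-dimensional monotone estimation problems, using a dyadic decomposition along the first coordinate, and to control the total sample cost by an adaptive pruning argument.

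\textbf{Step 1: dyadic decomposition in the first coordinate.} Assume without loss of generality that $K-1$ is a power of two, and consider the dyadic tree over $G^U(K)$ in the first coordinate. For any $x=(x_1,x_2)\in\mathcal{G}^U(K)$, the prefix $[0,x_1]$ is a disjoint union of at most one dyadic interval per level, so at most $\log_2 K+1$ intervals $(a,b]$ (the atom $\{0\}$ is handled as a degenerate interval). Hence
\[
F(x)=\sum_{(a,b]}\mathbb{P}_{X\sim\mathcal{D}}\bigl(X\in(a,b]\times[0,x_2]\bigr).
\]
It therefore suffices to estimate, for every dyadic node $(a,b]$, the monotone function $w\mapsto P_{ab}(w)\coloneq\mathbb{P}(X\in(a,b]\times[0,w])$ uniformly over $w\in G^U(K)$, to accuracy $\Delta'\coloneq\Delta/(c\log K)$. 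Summing these errors along a decomposition then gives accuracy $\Delta$.

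\textbf{Step 2: top-down pruning.} The root has mass $1$. For each node, compute its total mass $P_{ab}(1)$ using \textsf{Estimate} (\Cref{lem:Estimate}) with $\epsilon\approx\Delta'/\sqrt{\log(K/\delta)}$.
\begin{itemize}
\item Expand a node's two children only if the node's estimated mass is at least $2\Delta'$.
\item Within a level the nodes are disjoint, so at most $\mathcal{O}(1/\Delta')$ nodes per level are expanded. Hence only $\mathcal{O}(\log K/\Delta')$ calls to \textsf{Estimate} are made at this stage.
\item For a node that is never examined or is pruned, set its estimate to $\widehat P_{ab}\equiv 0$. Its ancestor was either pruned (true mass $\le 3\Delta'$) or it has itself small mass. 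In either case every descendant has mass $\le 3\Delta'$, so the error is $\mathcal{O}(\Delta')$ for each such node appearing in a decomposition.
\end{itemize}

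\textbf{Step 3: one-dimensional quantile search within surviving nodes.} For each surviving node with mass $p$, locate grid points $w_1<w_2<\dots$ at which $P_{ab}$ crosses successive multiples of $\Delta'$. There are at most $p/\Delta'+1$ such points. Each is found by binary search on $G^U(K)$ in $\log K$ steps, with each step a call to \textsf{Estimate} at accuracy $\Delta'$. Define $\widehat P_{ab}(w)$ as the estimated value at the largest breakpoint $\le w$. Monotonicity then gives uniform error $\mathcal{O}(\Delta')$ between breakpoints.

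\textbf{Step 4: cost accounting.} The breakpoint counts sum to $\mathcal{O}(1/\Delta')$ per level, because node masses at a level sum to at most $1$. The total number of rounds is therefore
\[
\mathcal{O}\!\left(\log K\cdot\frac{1}{\Delta'}\cdot\log K\cdot\frac{\log(K/\delta)}{\Delta'^2}\right)=\frac{1}{\Delta^3}\log(K/\delta)^{\mathcal{O}(1)}.
\]
Correctness follows from a union bound over the polynomially many (in $K/\Delta$) calls to \textsf{Estimate}, after rescaling $\delta$.

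\textbf{Main obstacle.} I expect the hardest step to be the error bookkeeping in Steps 2 and 3. Pruning decisions and binary-search comparisons are made on noisy estimates, so the thresholds need slack; this is the reason for the $2\Delta'$ pruning threshold against $\Delta'$ estimation noise. The sum of masses per level must also remain $\mathcal{O}(1)$ even with misclassified nodes. My first attempt would be to condition on the good event that all estimates are $\Delta'/4$-accurate, and then argue deterministically that:
\begin{itemize}
\item expanded nodes have true mass $\ge\Delta'$;
\item pruned nodes have true mass $\le 3\Delta'$;
\item monotone interpolation loses at most $2\Delta'$ per node.
\end{itemize}
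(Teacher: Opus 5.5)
Your proposal is correct and follows essentially the same route as the result of \citet{castiglioni2026sample} that the paper imports without proof (and whose machinery it adapts in \Cref{sec:relative}): decompose $[0,x_1]$ into $\mathcal{O}(\log K)$ dyadic columns, run a mass-adaptive one-dimensional search with \textsf{Estimate} along the second coordinate, and bound the number of calls by summing masses level by level. Your top-down pruning of light columns is what keeps the cost free of an additive $K/\epsilon^2$ term like the one in \Cref{lemma: dif}, your per-threshold binary searches in place of recursive bisection change only constants, and the one nit is the union bound: taken over $\mathrm{poly}(K/\Delta)$ calls it gives $\log(K/(\Delta\delta))$ instead of the stated $\log(K/\delta)$, which you fix by memoizing one estimate per (dyadic node, grid value) pair so that at most $\mathcal{O}(K^2)$ estimates are ever formed.
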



\section{Main Results and Proof Plan}

The main contribution of this paper is the following theorem.

\begin{restatable}{theorem}{thmmain} \label{thm:main}
    Let $\{X_t\}$ be i.i.d. samples in $[0,1]^2$. There exists an algorithm that guarantees 
    \[R_T= \widetilde{\mathcal{O}}\left( T^{\frac{7}{10}}\right).\]
\end{restatable}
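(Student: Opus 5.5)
The proof assembles the four components from the techniques overview into a two-phase algorithm and balances their parameters. We fix a fine uniform grid $\mathcal{G}^U(K)$ with $K=T$ (or any polynomial in $T$). Since $g$ is $1$-Lipschitz, restricting to the grid costs only a discretization term. We still need care for the CDF factor, since $F$ is not Lipschitz. We will use the monotonicity of $F$ and round the optimum $x^\star$ upward to the nearest grid point $\bar x\ge x^\star$. Then $F(\bar x)\ge F(x^\star)$ and $|g(\bar x)-g(x^\star)|\le 2/(K-1)$. When $g(x^\star)\ge0$, which we may assume since otherwise the optimum is at most $0$ and is matched by the origin, this gives $r(\bar x)\ge r(x^\star)-\mathcal{O}(1/K)$. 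The discretization term is therefore $\mathcal{O}(T/K)=\mathcal{O}(1)$, and every other quantity depends on $K$ only polylogarithmically.

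\emph{Phase 1 (coarse pruning).} We run \textsf{UniformCDF} from \Cref{theo: mainOLD} with accuracy $\Delta$ on the full grid. This takes $\widetilde{\mathcal{O}}(\Delta^{-3})$ rounds, and we charge regret at most $1$ per round. The outcome is a uniform $\Delta$-approximation $\widetilde P$, hence estimated rewards $\widehat r(x)=g(x)\widetilde P(x)$ with $|\widehat r-r|\le\Delta$ on the whole grid. We then partition the grid into three classes. Candidates have $\widehat r\ge \max\widehat r-2\Delta$, and this class contains the optimum. Discarded points have $\widehat r<\max\widehat r-c\Delta$ for a suitable constant $c$. The remaining points form the intermediate class, all of which have true gap $\mathcal{O}(\Delta)$. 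From the candidates plus a chosen subset of the intermediate points we build a working grid $\mathcal{G}\subseteq\mathcal{G}^U(K)$. We choose the intermediate points so as to make $\mathcal{R}(\mathcal{G})$ well connected. Every point of $\mathcal{G}$ then has gap $\mathcal{O}(\Delta)$.

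\emph{Phase 2 (relative learning plus misspecified feedback graphs).} First, for every edge $(x,y)$ of $\mathcal{R}(\mathcal{G})$, with corner $z$, we estimate $D(x;y)=F(y)-F(x)$ to accuracy $\epsilon$. We write the difference as a sum of two axis-parallel contributions, $x\to z$ and $z\to y$. Each contribution is a difference of CDF values along a segment contained in $\mathcal{G}$, i.e.\ the mass of a rectangle $(a,b]\times[0,w]$ (or its transpose). We decompose each such rectangle into $\mathcal{O}(\log K)$ dyadic pieces whose bisection points lie in $\mathcal{G}$, using the index-based bisection. Each piece is estimated by \textsf{Estimate} (\Cref{lem:Estimate}), with the accuracies allocated by the binary-search scheme of \citet{castiglioni2026sample}, so that $\widetilde{\mathcal{O}}(\epsilon^{-2})$ samples suffice per maximal segment. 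Only the maximal axis-parallel segments of $\mathcal{G}$ need to be handled, at most $\mathcal{O}(K)$ rows and columns. Every query lands in $\mathcal{G}$, so its regret is only $\mathcal{O}(\Delta)$. Our target cost for this step is $\widetilde{\mathcal{O}}(\Delta\cdot M\epsilon^{-2})$, where $M$ is the number of segment families needed, which should be polylogarithmic after grouping dyadic scales across rows.

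Second, we run a UCB algorithm for bandits with feedback graphs and $\epsilon$-misspecified side observations on $\mathcal{R}(\mathcal{G})$. Pulling $x$ yields an unbiased bit for $F(x)$, and hence an $\epsilon$-biased observation $g(y)\bigl(\mathbb{I}(X_t\le x)+\widehat D(x;y)\bigr)$ of $r(y)$ for every neighbour $y$. We prove the regret bound $\widetilde{\mathcal{O}}(\sqrt{\alpha(\mathcal{R}(\mathcal{G}))T}+\epsilon T)$ by the standard argument. Confidence widths are inflated by $\epsilon$, and counting pulls of suboptimal arms is done through an independent-set (Turán/greedy) argument, as in the unbiased analysis.

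\emph{Balancing, and the main obstacle.} The total regret is
\[
\widetilde{\mathcal{O}}\Bigl(\Delta^{-3}+\Delta M\epsilon^{-2}+\sqrt{\alpha(\mathcal{R}(\mathcal{G}))\,T}+\epsilon T\Bigr).
\]
Setting $\epsilon=T^{-3/10}$ makes the last term $T^{7/10}$. Taking $\Delta=T^{-7/30}$ makes the first term $T^{7/10}$. It then suffices to show that the smoothing choice guarantees $\alpha(\mathcal{R}(\mathcal{G}))\le\widetilde{\mathcal{O}}(T^{2/5})$ and keeps the relative-learning cost $\Delta M\epsilon^{-2}$ at most $\widetilde{\mathcal{O}}(T^{7/10})$, i.e.\ $M\le \widetilde{\mathcal{O}}(T^{1/3})$. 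If the combinatorial lemma comes out in a different form, the same bookkeeping re-optimizes $\Delta$ and $\epsilon$.

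The hard step is the combinatorial lemma bounding the independence number by a power of $1/\Delta$. It must hold for arbitrary, possibly highly correlated $\mathcal{D}$ and irregular candidate sets. My first attempt exploits structure. Along a row the reward $g\cdot F$ is a Lipschitz function times a monotone function, so each row can oscillate across the $\Delta$-level bands only $\mathcal{O}(1/\Delta)$ times. Including the intermediate points therefore leaves each row of $\mathcal{G}$ a union of $\mathcal{O}(1/\Delta)$ intervals. The same holds for columns. The intended conclusion, which is the step to verify, is that pairs of mutually far, non-L-connected points are limited by these few row/column runs, bounding $\alpha$ by a polynomial in $1/\Delta$.
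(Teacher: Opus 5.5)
\textbf{There is a genuine gap: the independence-number lemma, the step that actually breaks $T^{3/4}$, is missing.} Your pipeline (coarse pruning, smoothing, relative learning, misspecified feedback-graph UCB) is the paper's. But you leave that bound open, and the row/column run-counting you sketch does not lead to it.

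You also aim the bound at the wrong graph. You run UCB on all of $\mathcal{R}(\mathcal{G})$, but intermediate points carry no connectivity guarantee. You should restrict the arms to the subgraph induced on the candidates, which still contains the optimum, and use intermediate points only as relays on L-paths.

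The missing idea is a one-sided Lipschitz argument, and it needs no oscillation counting:
\begin{itemize}
\item First make the sign of $g$ uniform on the candidates: keep only candidates with $g\ge 0$ if any exist, and otherwise mirror everything below.
\item Take a candidate $x$ and any grid point $y\in[x_1,x_1+\Delta]\times[x_2,x_2+\Delta]$. Monotonicity gives $F(y)\ge F(x)$ and Lipschitzness gives $g(y)\ge g(x)-2\Delta$. Together with the $\Delta$-accuracy of $\widehat{F}$, this yields $\tilde r(y)\ge \tilde r(x)-4\Delta\ge \widetilde{\OPT}-6\Delta$.
\item Hence any two candidates with $\lVert x-y\rVert_\infty\le\Delta$ are joined by an L-path whose two legs each go up/right from their own endpoint. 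For incomparable $x_1<y_1$, $y_2<x_2$, take the corner $z=(y_1,x_2)$.
\item The concrete smoothing rule is: add the L-path between two candidates whenever all its points have $\tilde r\ge\widetilde{\OPT}-6\Delta$. This makes such candidates adjacent while keeping every point of $\mathcal{G}$ $\mathcal{O}(\Delta)$-suboptimal.
\item Independent sets of the candidate subgraph are therefore $\Delta$-separated in $\ell_\infty$, and packing gives $\alpha=\mathcal{O}(\Delta^{-2})$.
\end{itemize}

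\textbf{Your bookkeeping does not close either.}
\begin{itemize}
\item With the bound $\alpha=\mathcal{O}(\Delta^{-2})$, your choice $\Delta=T^{-7/30}$ gives $\sqrt{\alpha T}=T^{11/15}$. The target $\alpha\le T^{2/5}$ is not what any available argument delivers.
\item Your Phase 2 cost claims are false. Neither $\tilde{\mathcal{O}}(\epsilon^{-2})$ samples per maximal segment nor a polylogarithmic $M$ is possible. A single full row is one maximal segment, and learning all differences along it to accuracy $\epsilon$ already needs order $\epsilon^{-3}$ one-bit queries, by the standard one-dimensional lower-bound construction.
\item The index-based \textsf{BinS}, run over the dyadic family $\mathcal{I}^\star$, costs $\tilde{\mathcal{O}}(\epsilon^{-3}+K\epsilon^{-2})$ samples. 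Column masses sum to one at each of the $\mathcal{O}(\log K)$ dyadic levels, and each of the $\mathcal{O}(K)$ dyadic columns pays for two endpoint estimates.
\item With your $K=T$, the second term alone exceeds $T$. You must take $K=\Theta(1/\epsilon)$ and accept an $\mathcal{O}(\epsilon T)$ discretization error, which is of the same order as the misspecification term.
\end{itemize}

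The regret terms are then $\Delta^{-3}$, $\Delta\epsilon^{-3}$, $\sqrt{T}/\Delta$ and $\epsilon T$. Setting $\Delta=T^{-1/5}$ and $\epsilon=T^{-3/10}$ balances the last three at $T^{7/10}$, with the first at $T^{3/5}$.

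A minor point: when $g(x^\star)<0$, round down to a grid point below $x^\star$ rather than appealing to the origin. With an atom at the origin and $g(0,0)<0$, the origin need not match a negative optimum.
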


Since bilateral trade is a special case of our framework
(see \Cref{sec:bilateral}), \Cref{thm:main} immediately yields the following.

\begin{corollary}\label{cor:bilateralTrade}
 There exists an algorithm for profit maximization in repeated bilateral trade that guarantees 
    \[R_T= \widetilde{\mathcal{O}}\left( T^{\frac{7}{10}}\right).\]
\end{corollary}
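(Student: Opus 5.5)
\Cref{cor:bilateralTrade} should follow from \Cref{thm:main} by a direct reduction, using the embedding described in \Cref{sec:bilateral}. The plan is to check that the bilateral-trade instance satisfies every hypothesis of our framework. Then we run the algorithm of \Cref{thm:main} on the embedded instance and translate its queries back into prices. The main thing to verify is that the observed feedback, the per-round reward and the comparator class coincide exactly under the embedding, so that the regret is preserved rather than merely bounded.

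\textbf{Step 1: the embedding is a valid instance.} Define the map $\phi(p,q)\coloneq(p,1-q)$. It is a bijection of $[0,1]^2$ onto itself, with inverse $\phi^{-1}(x_1,x_2)=(x_1,1-x_2)$. Since the pairs $(S_t,B_t)$ are i.i.d., the points $X_t=(S_t,1-B_t)$ are i.i.d.\ samples in $[0,1]^2$ from some unknown $\mathcal{D}$. The hypothesis of \Cref{thm:main} places no further requirement on $\mathcal{D}$: no independence between coordinates and no regularity. The function $g(x_1,x_2)=1-x_1-x_2$ is known. It maps $[0,1]^2$ into $[-1,1]$. It is $1$-Lipschitz with respect to $\ell_1$, because
\[
|g(x)-g(y)|\le |x_1-y_1|+|x_2-y_2|=\|x-y\|_1 .
\]

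\textbf{Step 2: feedback and reward coincide.} At round $t$, the wrapper receives a query $x_t$ from the algorithm of \Cref{thm:main}. It posts the prices $(p_t,q_t)=\phi^{-1}(x_t)$. A trade occurs if and only if $S_t\le p_t$ and $B_t\ge q_t$, which is equivalent to $S_t\le x_{t,1}$ and $1-B_t\le x_{t,2}$, that is, $X_t\le x_t$. Hence the one-bit trade/no-trade feedback observed in bilateral trade is exactly $\mathbb{I}(X_t\le x_t)$. The realized profit is
\[
(q_t-p_t)\,\mathbb{I}(S_t\le p_t,\,B_t\ge q_t)=g(x_t)\,\mathbb{I}(X_t\le x_t)=r_t(x_t).
\]

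\textbf{Step 3: regret is identical.} Because $\phi$ is a bijection of $[0,1]^2$, maximizing over all price pairs $(p,q)$ is the same as maximizing over all points $x\in[0,1]^2$. Together with Step 2, this shows that the profit regret of the wrapper equals the regret $R_T$ of the underlying algorithm on the embedded instance. \Cref{thm:main} then gives $R_T=\widetilde{\mathcal{O}}(T^{7/10})$.

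No step here is a genuine obstacle; the only point needing care is the sign flip on the buyer coordinate in Step 2. If instead the learner observes the two acceptance bits separately, that is richer feedback, and the same algorithm still applies after discarding the extra bit.
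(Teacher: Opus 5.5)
Your proposal is correct and takes essentially the paper's route: the paper obtains the corollary in one line from \Cref{thm:main}, using the embedding $X_t=(S_t,1-B_t)$, $x_t=(p_t,1-q_t)$, $g(x_1,x_2)=1-x_1-x_2$ of Section~\ref{sec:bilateral}. Your Steps 1--3 spell out the checks the paper leaves implicit: that $g$ maps into $[-1,1]$ and is $\ell_1$-Lipschitz, that feedback and reward coincide, and that the comparator class is preserved under the bijection.
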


The rest of the paper is devoted to proving \Cref{thm:main}.
The algorithm proceeds in three sequential phases, summarized in
\Cref{alg: main non-bounded}, and its design is guided by the following core tension.

To minimize the regret we face two competing objectives: we want the working grid $\mathcal{G}$ to be as \emph{dense} as possible, so that the uniform exploration in the spirit
of \citep{castiglioni2026sample} can be applied efficiently and the optimal
$\tilde{O}(1/\epsilon^3)$ sample complexity is achieved; but we also want
$\mathcal{G}$ to contain only points that are \emph{not too suboptimal}, since every query to a too suboptimal point incurs a large per-round regret.
The three phases resolve this tension by finding the right compromise between this contrasting goals.

\paragraph{Phase 1 ---  Coarse Pruning via Uniform Approximation.}
We run $\textsc{UniformCDF}(\mathcal{G}^U(K), \Delta, \delta)$
(\Cref{theo: mainOLD}) at a low precision $\Delta$, obtaining a
uniform approximation $\widehat{F}$ of the CDF over $\mathcal{G}^U(K)$ at a cost
of $\widetilde{\mathcal{O}}(1/\Delta^3)$ queries (and hence at most $\widetilde{\mathcal{O}}(1/\Delta^3)$
regret). Note that this approximation alone is clearly not sufficient to break the $\tilde{O}(T^{3/4})$ barrier: running this phase alone with $\Delta = T^{-1/4}$ already yields
$\widetilde{\mathcal{O}}(T^{3/4})$ regret via a standard explore-then-commit argument. Our
improvement instead comes from choosing a much larger $\Delta$ and using this phase only to obtain a rough approximation.

\paragraph{Processing Step --- Building a Highly-Connected Graph Through Smoothing.}
The previous step leaves us with three classes of points: \emph{candidate optima} $\mathcal{G}^{\OPT}$, which must be kept for the following phase; \emph{clearly suboptimal} points, which must be removed since querying them would incur large per-round regret; and a remaining set of \emph{borderline} points, which are not suboptimal enough to require removal, but whose inclusion is optional. Since our goal is simply to minimize the independence number of $\mathcal{G}^\OPT$,the optimal approach  
is straightforward: we improve the connectivity by considering a graph $\mathcal{G}$ that keeps all borderline points that help connect points in $\mathcal{G}^{\OPT}$, discard the rest, and then consider only the subgraph $\mathcal{R}^\OPT$ induced by the vertexes $\mathcal{G}^\OPT$.
Overall, this procedure can be seen as a smoothing of the graph and grid that removes ``sharp,'' isolated components (see Figure \ref{fig:smoothing}).

\paragraph{Phase 2 --- Relative Learning on an Incomplete Grid.}
So far, we have only used the notion of connectivity to build a well-connected graph, on the promise that learning would be easier for more connected graphs. We now put this connectivity to use. Rather than estimating the CDF at each point of $\mathcal{G}$ individually, we instead estimate only the \emph{relative differences}
\[ D(x;y)= F(y)-F(x)\]
between pairs of points connected in $\mathcal{R}(\mathcal{G})$.
The procedure $\textsc{RLS}$ (\Cref{alg: RLS}) outputs estimates
$\widehat{D}(x;y)$ for all edges $(x,y)\in\mathcal{R}(\mathcal{G})$ satisfying,
with high probability,
\[
    \bigl|\widehat{D}(x;y) - D(x;y)\bigr| \le \epsilon
    \qquad\forall\,(x,y)\in\mathcal{R}(\mathcal{G}),
\]
using only $\tilde{O}(1/\epsilon^3)$ queries in total --- independently of
$|\mathcal{G}|$. To do so, it first estimates the distance between axis-aligned points, and then exploits the $L$-shaped structure of edges in
$\mathcal{R}(\mathcal{G})$ to extend these estimates to all edges.

\paragraph{Phase 3 --- Multi-Armed Bandits with Feedback Graph and $\epsilon$-Misspecified Rewards.}
The estimates $\widehat{D}$ from Phase 2 turn $\mathcal{R}^\OPT$ into a
\emph{feedback graph} with $\epsilon$-misspecified rewards: querying a point $x$
yields a slightly biased estimator of the reward of every neighbor $y$ of $x$ in
$\mathcal{R}^\OPT$.
We design a UCB-style algorithm (\Cref{alg: graph}) for this misspecified setting
and show that its regret scales as
\[
    \tilde{\mathcal{O}}\!\left(\sqrt{\alpha(\mathcal{R}^\OPT)\,T} + \epsilon\, T\right),
\]
replacing the usual dependence on the number of arms with the independence number of the feedback graph, at the cost of an additional $\epsilon$ per-round regret due to the bias.

\begin{algorithm}[h]\caption{}\label{alg: main non-bounded}
    \begin{algorithmic}[1]
        \State Input: errors $\epsilon<\Delta$, probability $\delta$ 
        \State $\hat{F}(\cdot)\gets \textsf{UniformCDF}( \mathcal{G}^U(1/\epsilon), \Delta,\delta)$ \Comment{Phase 1}
        \State $(\cG, \mathcal{R}^\OPT)=\textsf{BuildGraph}(\mathcal{G}^U(1/\epsilon),\hat{F}(\cdot),\Delta)$
        \State $\{\hat{D}(\cdot\;;\cdot)\} \gets  \textsf{RLS}(\cG,\epsilon)$ \Comment{Phase 2}
        \State Run \textsf{$\epsilon$-GraphUCB}$(\mathcal{R}^\OPT, \epsilon, \delta)$ \Comment{Phase 3}
    \end{algorithmic}
\end{algorithm}

\section{Build a Highly-Connected Graph Through Smoothing}

\begin{figure}
    \centering
    \begin{subfigure}{0.45\textwidth}
    \centering
\begin{tikzpicture}[scale=3]

\def\step{0.25}



\foreach \i in {0,1,2,3,4}{
    \draw[gray!20, thin] ({\i*\step},0) -- ({\i*\step},1);
    \draw[gray!20, thin] (0,{\i*\step}) -- (1,{\i*\step});
}

\def\missing{(0.25,0.5),(0.5,0.5),(0.75,0.5)}

\foreach \i in {0,1,2,3,4}{
        \fill[black] ({\i*\step},{(4-\i)*\step}) circle (1.5pt);
}
\foreach \i in {0,1,2,3}{
        \fill[gray!70,thin] ({\i*\step},{(4-\i-1)*\step}) circle (0.8pt);
}
\foreach \i in {0,1,2}{
        \fill[gray!70,thin] ({\i*\step},{(4-\i-2)*\step}) circle (0.8pt);
}

\foreach \i in {0,1}{
        \fill[gray!70,thin] ({\i*\step},{(4-\i-3)*\step}) circle (0.8pt);
}

\fill[gray!70,thin] (0,0) circle ((0.8pt);

\foreach \i in {2,3,4}{
        \fill[gray!70,thin] ({\i*\step},{(4-\i+2)*\step}) circle ((0.8pt);
}
\foreach \i in {1,2,3,4}{
        \fill[green!60] ({\i*\step},{(4-\i+1)*\step}) circle ((1.5pt);
}

\foreach \i in {3,4}{
        \fill[gray!70,thin] ({\i*\step},{(4-\i+3)*\step}) circle ((0.8pt);
}
        \fill[gray!70,thin] (1,1) circle ((0.8pt);

\fill[green!60] (0.5,1) circle ((1.5pt);
\fill[green!60] (1,0.5) circle ((1.5pt);

\draw[green!60!black, thick] (0,1) -- (0.5,1);
\draw[green!60!black, thick] (0.5,0.5) -- (0.5,1);
\draw[green!60!black, thick] (1,0.5) -- (0.5,0.5);
\draw[green!60!black, thick] (1,0.5) -- (1,0);

\fill[green!60] (1,1) circle ((1.5pt);

\node[right, font=\scriptsize] at (1.05, 0.9) {\textbullet\ $\in \mathcal{G}^\OPT$};
\node[right, font=\scriptsize, green!60!black] at (1.05, 0.75) {\textbullet Borderline points};
\node[right, font=\scriptsize,gray!60] at (1.05, 0.6) {\textbullet Clearly suboptimal points};
\end{tikzpicture}
\end{subfigure}
\centering
\begin{subfigure}{0.5\textwidth}
\centering
\begin{tikzpicture}[scale=3]

\def\step{0.25}



\foreach \i in {0,1,2,3,4}{
    \draw[gray!20, thin] ({\i*\step},0) -- ({\i*\step},1);
    \draw[gray!20, thin] (0,{\i*\step}) -- (1,{\i*\step});
}

\def\missing{(0.25,0.5),(0.5,0.5),(0.75,0.5)}

\foreach \i in {0,1,2,3,4}{
        \fill[black] ({\i*\step},{(4-\i)*\step}) circle (1.5pt);
}
\foreach \i in {0,1,2,3}{
        \fill[gray!70,thin] ({\i*\step},{(4-\i-1)*\step}) circle (0.8pt);
}
\foreach \i in {0,1,2}{
        \fill[gray!70,thin] ({\i*\step},{(4-\i-2)*\step}) circle (0.8pt);
}

\foreach \i in {0,1}{
        \fill[gray!70,thin] ({\i*\step},{(4-\i-3)*\step}) circle (0.8pt);
}

\fill[gray!70,thin] (0,0) circle ((0.8pt);

\foreach \i in {2,3,4}{
        \fill[gray!70,thin] ({\i*\step},{(4-\i+2)*\step}) circle ((0.8pt);
}
\foreach \i in {1,2,3,4}{
        \fill[black] ({\i*\step},{(4-\i+1)*\step}) circle ((1.5pt);
}

\foreach \i in {3,4}{
        \fill[gray!70,thin] ({\i*\step},{(4-\i+3)*\step}) circle ((0.8pt);
}
        \fill[gray!70,thin] (1,1) circle ((0.8pt);

\fill[black] (0.5,1) circle ((1.5pt);
\fill[black] (1,0.5) circle ((1.5pt);

\node[right, font=\scriptsize] at (1.05, 0.9) {\textbullet\ $\in \mathcal{G}$};

\node[right, font=\scriptsize,gray!60] at (1.05, 0.75) {\textbullet $\notin \mathcal{G}$};
\end{tikzpicture}
\end{subfigure}
\caption{Graph before and after adding borderline points that improve connectivity to the candidate optima $\cG^{\OPT}$.} \label{fig:smoothing}
\end{figure}

At a high level, the goal of this section is to build an incomplete grid $\mathcal{G}\subseteq \mathcal{G}^U(1/\epsilon)$ that is highly connected and does not contain highly suboptimal points.
 
Our algorithm is presented in \Cref{alg: build graph} and works as follows. For each point $x\in \cG^U(1/\epsilon)$, it computes the estimated reward
\[\tilde r(x)\coloneqq g(x) \hat F(x),\]
and defines $\widetilde{\OPT} \coloneq \max_{x\in \mathcal{G}^U(1/\epsilon)}\tilde r(x)$ as the estimated optimal reward.

Since we do not want to remove the optimal solution from our final grid $\cG$, we need to keep all points that we cannot rule out as optimal, defined as
\[\cG^{\OPT}\coloneqq \{ x\in \cG^U(1/\epsilon): \tilde r(x)\ge \widetilde{\OPT} - 2\Delta\}.\]
We then observe that it is also safe to add points to the final graph $\cG$, provided they are at most $O(\Delta)$-suboptimal and improve connectivity. Given a set $\cG'$, we let
\[\tilde r(\cG')\coloneqq \min_{x \in \cG'} \tilde r(x).\]

Starting from $\mathcal{G}=\cG^{\OPT}$, we iterate over all pairs $x,y \in \cG^\OPT$ and attempt to connect them: this is possible whenever there exists an $L$-shaped path connecting $x$ and $y$ that is at most $6\Delta$-suboptimal with respect to the estimated $\tilde{r}$, in which case we add both segments of the path to $\cG$. Figure \ref{fig:smoothing} shows an example of this procedure.

\begin{algorithm}[h]\caption{\textsf{BuildGraph}}\label{alg: build graph}
    \begin{algorithmic}[1]
        \State Input: grid $\cG^U(1/\epsilon)$, CDF-estimation $\hat F$, error $\Delta$
        \State $\widetilde{\OPT} \gets \max_{x\in \mathcal{G}^U(1/\epsilon)}\;g(x)\hat{F}(x)$
        \State $\mathcal{G}^\OPT\gets \{x\in \mathcal{G}^U(1/\epsilon): g(x)\hat{F}(x)\ge \widetilde{\OPT}-2\Delta \}$
        \If{$\mathcal{G}^\OPT\cap \{x\in \mathcal{G}^U(1/\epsilon): g(x)\ge 0\}\neq \varnothing$}
        \State $\mathcal{G}^\OPT \gets \mathcal{G}^\OPT\cap \{x\in \mathcal{G}^U(1/\epsilon): g(x)\ge 0\}$
        \EndIf
        \State $\cG=\cG^\OPT$
        \For {$x,y\in \cG^\OPT$}
            \If{$\exists z \in T(x,y): \tilde r(\mathfrak{I}(x,z)\cup \mathfrak{I}(y,z))  \ge \widetilde{\OPT}-6\Delta  $}
                \State $\cG= \cG \cup \mathfrak{I}(x,z)\cup \mathfrak{I}(y,z)$
            \EndIf
        \EndFor
        \State Create the graph $\mathcal{R}(\mathcal{G})$ induced by $\mathcal{G}$
        \State Create $\mathcal{R}^\OPT$ subgraph of $\mathcal{R}(\mathcal{G})$ induced by the vertexes $\mathcal{G}^\OPT$
        \State\Return ($\cG,\mathcal{R}^\OPT$)
    \end{algorithmic}
\end{algorithm}

We proceed to analyze the guarantees of the algorithm, which rely on the following clean event.

\begin{definition}[Clean Event $E_0$]
We say the clean event $E_0$ holds if
\[\max_{x\in\mathcal{G}^U(K)}|\widehat{F}(x)-F(x)|\le\Delta.\]
\end{definition}

We now show that, under $E_0$, $\cG^\OPT$ (and hence $\cG$) contains the optimum, and that $\cG$ contains only $O(\Delta)$-suboptimal points.

\begin{lemma}\label{lemma: subopt gap}
Let $\OPT \coloneqq \max_{x\in\mathcal{G}^U(1/\epsilon)} g(x)F(x)$ denote the true optimal value. Under the clean event $E_0$, it holds that
\[ \max_{x \in \cG^\OPT}  g(x) F(x) = \OPT \]
and
\[   g(x) F(x) \ge \OPT - 8 \Delta \quad \forall x \in \cG.\]
\end{lemma}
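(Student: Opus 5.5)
The plan is to reduce both claims to one pointwise comparison between the estimated reward $\tilde r(x) = g(x)\hat F(x)$ and the true reward $r(x)\coloneqq g(x)F(x)$ on the grid $\cG^U(1/\epsilon)$. Under $E_0$, since $|g(x)|\le 1$, we get for every grid point
\[
|\tilde r(x) - r(x)| = |g(x)|\,|\hat F(x)-F(x)| \le \Delta .
\]
Taking maxima over the grid then gives $|\widetilde{\OPT}-\OPT|\le \Delta$. Everything else is bookkeeping with this estimate.

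\emph{First claim.} Since $\cG^\OPT$ is a subset of the grid, $\max_{x\in\cG^\OPT} r(x)\le\OPT$ is trivial. For the reverse inequality, I will exhibit a maximizer $x^\star$ of $r$ that survives both filtering steps of \Cref{alg: build graph}.

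For the first filter, take any maximizer $x^\star$. Then
\[
\tilde r(x^\star)\ge \OPT-\Delta \ge \widetilde{\OPT}-2\Delta,
\]
so $x^\star$ lies in $\cG^\OPT$ before the sign filter.

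The step needing care is the optional intersection with $\{g\ge 0\}$. If that intersection is empty, the filter is skipped and nothing changes. Otherwise there is some $\bar x\in\cG^\OPT$ with $g(\bar x)\ge 0$, so $\OPT\ge r(\bar x)\ge 0$. I then split into two cases:
\begin{itemize}
\item If $g(x^\star)\ge 0$, then $x^\star$ survives the filter and we are done.
\item If $g(x^\star)<0$, then $r(x^\star)\le 0$, which forces $\OPT=0$. In that case $r(\bar x)\ge 0=\OPT$, so $\bar x$ is itself a maximizer lying in the filtered $\cG^\OPT$.
\end{itemize}
In both cases $\max_{x\in\cG^\OPT} r(x)=\OPT$.

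\emph{Second claim.} Every $x\in\cG$ is of one of two kinds:
\begin{itemize}
\item it lies in $\cG^\OPT$, in which case $\tilde r(x)\ge\widetilde{\OPT}-2\Delta$;
\item it lies on an added L-shaped path $\mathfrak{I}(x',z)\cup\mathfrak{I}(y',z)$, whose minimum estimated reward is at least $\widetilde{\OPT}-6\Delta$ by the test in the loop.
\end{itemize}
Either way $\tilde r(x)\ge\widetilde{\OPT}-6\Delta$. Applying the pointwise bound twice gives
\[
r(x)\ge \tilde r(x)-\Delta\ge \widetilde{\OPT}-7\Delta\ge \OPT-8\Delta .
\]

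The only mildly delicate step is the sign-filter case in the first claim. The argument must show that discarding points with $g<0$ can never remove every maximizer, and this works because a nonnegative-$g$ point forces $\OPT\ge 0$. The rest is routine.
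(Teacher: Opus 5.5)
Your proof is correct and takes essentially the same route as the paper. The second claim is the same chain $\tilde r(x)\ge\widetilde{\OPT}-6\Delta \Rightarrow g(x)F(x)\ge \OPT-8\Delta$, and your first claim is the direct form of the paper's contrapositive argument: points dropped by the $2\Delta$ threshold are strictly suboptimal, and points dropped by the sign filter have value at most $0\le g(\bar x)F(\bar x)$. Your explicit remark that a maximizer with $g<0$ forces $\OPT=0$, which makes $\bar x$ itself a maximizer, spells out a step the paper only states in passing.
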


\begin{proof}
Let $\widetilde{x}^*$ and $x^*$ be such that
\begin{align*}
    \widetilde{x}^* \in  \arg \max_{x\in \mathcal{G}^U(1/\epsilon)}\;g(x)\hat{F}(x), \\
    x^* \in \arg \max_{x\in \mathcal{G}^U(1/\epsilon)}\;g(x){F}(x).
\end{align*}
For every $x\in \mathcal{G}$ we have $g(x)\hat{F}(x)\ge \widetilde{\OPT}-6\Delta$ by construction, which implies
\begin{align*}
    g(x)F(x)&\ge g(x)(F(x)-\hat{F}(x))+ g(x)\hat{F}(x)\ge -|F(x)-\hat{F}(x)| + \widetilde{\OPT}-6\Delta\\
    & \ge -\Delta + g(\widetilde{x}^*)\hat{F}(\widetilde{x}^*)-6\Delta \\
    &\ge g(x^*)\hat{F}(x^*)-7\Delta\\
    &= g(x^*)(\hat{F}(x^*)-F(x^*))+ g(x^*)F(x^*)-7\Delta\\
    & \ge \OPT - 8\Delta.
\end{align*}
This proves the second part of the statement.

Additionally, for every $x\notin \mathcal{G}^{\OPT}$ we have that either $g(x)\hat{F}(x)< \widetilde{\OPT}-2\Delta$ or $g(x)<0$ and there exists $y\in \mathcal{G}^\OPT$ such that $g(y)\ge 0$. If the latter condition is verified, it is immediate to observe that $\OPT \ge g(y)F(y)\ge 0 \ge g(x)F(x)$, and if the first condition is verified  we have 

\begin{align*}
      g(x)F(x)&= g(x)(F(x)-\hat{F}(x))+ g(x)\hat{F}(x)< |F(x)-\hat{F}(x)| + \widetilde{\OPT}-2\Delta\\
    & \le \Delta + g(\widetilde{x}^*)\hat{F}(\widetilde{x}^*)-2\Delta\\
    &= g(\widetilde{x}^*)(\hat{F}(\widetilde{x}^*)-F(\widetilde{x}^*))+ g(\widetilde{x}^*)F(\widetilde{x}^*)-\Delta\\
    & \le  g(\widetilde{x}^*)F(\widetilde{x}^*)\\
    &\le \OPT,
\end{align*}
which shows that $x^*\in \mathcal{G}^\OPT$ for at least one possible choice of $x^*$ and proves the first part of the statement.
\end{proof}

\begin{figure}

\centering
\begin{subfigure}[b]{0.5\textwidth}
\centering
\begin{tikzpicture}[scale=3]

\def\step{0.25}

\foreach \i in {0,1,2,3,4}{
    \draw[gray!20, thin] ({\i*\step},0) -- ({\i*\step},1);
    \draw[gray!20, thin] (0,{\i*\step}) -- (1,{\i*\step});
}
\foreach \i in {0,1,2,3,4}{
\foreach \j in {0,1,2,3,4}{
        \fill[gray!70,thin] ({\i*\step},{(\j)*\step}) circle (0.8pt);
}
}
\fill[black] (0,0) circle (1.5pt);
\fill[black] (1,1) circle (1.5pt);

\node[above left] at (0,1) {$z$};
\node[below right] at (1,0) {$z'$};

\draw[<->, shift={(0,-0.15)}] (0,0) -- (1,0) node[midway, below] {$\Delta$};
\draw[<->, shift={(-0.15,0)}] (0,0) -- (0,1) node[midway, left] {$\Delta$};

\draw[->, thick] (0,0) -- (0.33,0);
\draw[->, thick] (0.33,0) -- (0.66,0);
\draw[->, thick] (0.66,0) -- (1,0);

\draw[->, thick] (1,0) -- (1,0.33);
\draw[->, thick] (1,0.33) -- (1,0.66);
\draw[->, thick] (1,0.66) -- (1,1);

\draw[->, thick] (0,0) -- (0,0.33);
\draw[->, thick] (0,0.33) -- (0,0.66);
\draw[->, thick] (0,0.66) -- (0,1);

\draw[->, thick] (0,1) -- (0.33,1);
\draw[->, thick] (0.33,1) -- (0.66,1);
\draw[->, thick] (0.66,1) -- (1,1);

\node[right, font=\scriptsize] at (1.05, 0.9) {\textbullet\ $\in \mathcal{G}^+$};
\node[right, font=\scriptsize,gray!60] at (1.05, 0.75) {\textbullet $\notin \mathcal{G}^+$};
\end{tikzpicture}
\end{subfigure}
\begin{subfigure}[b]{0.49\textwidth}
\centering
\begin{tikzpicture}[scale=3]

\def\step{0.25}

\foreach \i in {0,1,2,3,4}{
    \draw[gray!20, thin] ({\i*\step},0) -- ({\i*\step},1);
    \draw[gray!20, thin] (0,{\i*\step}) -- (1,{\i*\step});
}
\foreach \i in {0,1,2,3,4}{
\foreach \j in {0,1,2,3,4}{
        \fill[gray!70,thin] ({\i*\step},{(\j)*\step}) circle (0.8pt);
}
}
\fill[black] (0,1) circle (1.5pt);
\fill[black] (1,0) circle (1.5pt);

\node[above right] at (1,1) {$z$};

\draw[<->, shift={(0,-0.15)}] (0,0) -- (1,0) node[midway, below] {$\Delta$};
\draw[<->, shift={(-0.15,0)}] (0,0) -- (0,1) node[midway, left] {$\Delta$};

\draw[->, thick] (1,0) -- (1,0.33);
\draw[->, thick] (1,0.33) -- (1,0.66);
\draw[->, thick] (1,0.66) -- (1,1);

\draw[->, thick] (0,1) -- (0.33,1);
\draw[->, thick] (0.33,1) -- (0.66,1);
\draw[->, thick] (0.66,1) -- (1,1);

\node[right, font=\scriptsize] at (1.05, 0.9) {\textbullet\ $\in \mathcal{G}^+$};
\node[right, font=\scriptsize,gray!60] at (1.05, 0.75) {\textbullet $\notin \mathcal{G}^+$};
\end{tikzpicture}
\end{subfigure}
\caption{Analysis of the two cases in the proof of \Cref{lemma: E0}, with $g(x)\ge 0$, for all $x\in \mathcal{G}^\OPT$. In the first, there are two possible $L$-paths which follow the Lipschitz direction, in the second just one.} \label{fig:twocases}
\end{figure}

We now show that the graph is highly connected. Since we known that the optimal point belong to $\cG^\OPT$, we just care about the connectivity of the graph restricted to this node. Formally, we define the subgraph which consists of the vertices $\mathcal{G}^\OPT$ and all edges from $\mathcal{R}(\mathcal{G})$ that have both endpoints in $\mathcal{G}^\OPT$.

\begin{definition}[Subgraph $\mathcal{R}^\OPT$]
\label{def: R opt}
    Let $\mathcal{R}(\mathcal{G})$ be the graph induced by $\mathcal{G}$ as per \Cref{def:graph}. We define $\mathcal{R}^\OPT$ as the subgraph of $\mathcal{R}(\mathcal{G})$ induced by the vertex set $\mathcal{G}^\OPT$.
\end{definition}

Then, we show that the subgraph induced by the vertex set $\mathcal{G}^\OPT$ has a small independence number. 
Intuitively, we focus on $\mathcal{G}^\OPT$ because this set contains the optimum under the clean event $E_0$; the remaining points in $\mathcal{G} \setminus \mathcal{G}^\OPT$ serve only to improve the connectivity between points within $\mathcal{G}^\OPT$ (according to \Cref{def:graph}). 

Intuitively, our proof works as follows.
Consider two points $x, y \in \mathcal{G}^\OPT$ at a distance of at most $\Delta$. By the one-sided Lipschitz property of the objective, these points can be connected via two consecutive axis-aligned segments—one horizontal and one vertical—forming an $L$-shaped path whose orientation depends on the relative positions of $x$ and $y$ (see Figure \ref{fig:twocases}). Given the one-sided Lipschitz property, the objective value decreases by at most $\Delta$ along each segment; consequently, both segments consist of $\mathcal{O}(\Delta)$-suboptimal points. The argument is completed by a covering argument, which shows there are at most $\mathcal{O}(1/\Delta^2)$ such non-connected elements.
Formally,

\begin{lemma}\label{lemma: E0} Under the clean event $E_0$, the graph $\mathcal{R}^\OPT$ has independent number
\begin{equation*}
    \alpha(\mathcal{R}^\OPT) \le \mathcal{O}\left(\frac{1}{\Delta^2}\right).
\end{equation*}

\end{lemma}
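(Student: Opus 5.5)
The plan is a covering argument. We partition $[0,1]^2$ into axis-aligned square cells of side $c\Delta$, for a small absolute constant $c$ ($c=1/4$ suffices); there are $\mathcal{O}(1/\Delta^2)$ such cells. We show that, under $E_0$, any two points $x,y\in\mathcal{G}^\OPT$ lying in the same cell are adjacent in $\mathcal{R}^\OPT$. Then an independent set of $\mathcal{R}^\OPT$ contains at most one point per cell, which gives $\alpha(\mathcal{R}^\OPT)=\mathcal{O}(1/\Delta^2)$.

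\textbf{Reduction to a reward bound.} By \Cref{alg: build graph}, for every pair $x,y\in\mathcal{G}^\OPT$ for which some corner $z\in T(x,y)$ satisfies $\tilde r(\mathfrak{I}(x,z)\cup\mathfrak{I}(y,z))\ge\widetilde{\OPT}-6\Delta$, the two segments are added to $\mathcal{G}$. Then $x$ and $y$ are adjacent in $\mathcal{R}(\mathcal{G})$ by \Cref{def:graph}, and hence in $\mathcal{R}^\OPT$. It is therefore enough to exhibit such a corner. Every point $w$ on the L-path satisfies $\|w-x\|_1\le 2c\Delta$ and $\|w-y\|_1\le 2c\Delta$. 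Under $E_0$ and $|g|\le 1$ we have $|\tilde r(\cdot)-g(\cdot)F(\cdot)|\le\Delta$, and $\tilde r(x),\tilde r(y)\ge\widetilde{\OPT}-2\Delta$. So it suffices to prove, for every $w$ on the path and for a suitable anchor $u\in\{x,y\}$, the bound
\[
g(w)F(w)\ge g(u)F(u)-2\Delta .
\]
This then gives $\tilde r(w)\ge \tilde r(u)-4\Delta\ge\widetilde{\OPT}-6\Delta$.

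\textbf{Choice of corner via monotonicity of $F$.} $F$ is componentwise nondecreasing. The choice of corner depends on the sign of $g$ on $\mathcal{G}^\OPT$, which the filtering step of \Cref{alg: build graph} makes uniform.

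If $g\ge 0$ on $\mathcal{G}^\OPT$, we take the componentwise-maximum corner (the one ``up-right''). If $x\le y$ componentwise, both corners lie in the box $[x,y]$; we take either one, and every $w$ on the path satisfies $w\ge x$, so we use anchor $u=x$. If $x,y$ are incomparable, we take $z=x\vee y\in T(x,y)$, and each point of $\mathfrak{I}(u,z)$ dominates $u$, for the corresponding $u\in\{x,y\}$ (see \Cref{fig:twocases}). In all cases $F(w)\ge F(u)$ and $|g(w)-g(u)|\le 2c\Delta$.
\begin{itemize}[leftmargin=*]
\item If $g(w)\ge0$, then $g(w)F(w)\ge g(w)F(u)\ge g(u)F(u)-2c\Delta$.
\item If $g(w)<0$, then $g(u)<2c\Delta$, so $g(u)F(u)<2c\Delta$. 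Meanwhile $g(w)F(w)\ge -2c\Delta$, which gives $g(w)F(w)\ge g(u)F(u)-4c\Delta$.
\end{itemize}
With $c=1/4$ both losses are at most $\Delta$.

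If instead $g<0$ on all of $\mathcal{G}^\OPT$, we take the componentwise-minimum corner, so that $w\le u$ and $F(w)\le F(u)$.
\begin{itemize}[leftmargin=*]
\item If $g(w)\le 0$, then $g(w)F(w)\ge g(w)F(u)\ge g(u)F(u)-2c\Delta$.
\item If $g(w)>0$, then $g(w)F(w)\ge 0> g(u)F(u)$.
\end{itemize}
In the comparable case $x\le y$, this time we anchor at $u=y$.

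\textbf{Main obstacle.} The delicate step is the sign bookkeeping for $g$. The argument uses that the direction of monotonicity that preserves the reward flips with the sign of $g$. It also uses that the filtering step of \Cref{alg: build graph} guarantees a uniform sign on $\mathcal{G}^\OPT$ (either $g\ge0$ throughout, or $g<0$ throughout), so one corner rule works for all pairs. Without this, a cell containing a point with $g\ge 0$ and one with $g<0$ could admit no good corner. Everything else is the routine conversion between $\tilde r$ and $gF$ under $E_0$, with the cell side $c\Delta$ tuned so that the accumulated error stays within the $6\Delta$ threshold.
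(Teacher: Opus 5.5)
Your proof is correct and follows essentially the same route as the paper's. You split on the sign of $g$ on $\mathcal{G}^{\OPT}$, which the filtering step makes uniform; you pick the corner in the direction where $F$ moves favourably (componentwise max when $g\ge 0$, min when $g<0$); you use monotonicity of $F$, Lipschitzness of $g$ and $E_0$ to certify the $\widetilde{\OPT}-6\Delta$ threshold along the whole L-path; and you close with a packing argument. The differences are cosmetic. The paper uses $\ell_\infty$-neighbourhoods of radius $\Delta$ instead of your cells of side $\Delta/4$, and it avoids your sub-case split on the sign of $g(w)$, because the chain $g(u)F(u)\le g(u)F(w)\le g(w)F(w)+\|w-u\|_1$ only needs the sign of $g(u)$.
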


\begin{proof}
    Suppose the clean event $E_0$ holds. 
    We split the analysis in two cases: 
    \begin{itemize}
        \item $\exists x\in \mathcal{G}^\OPT : g(x)\ge 0$, and therefore $g(x)\ge 0 \;\forall x\in \mathcal{G}^\OPT$
        \item $\nexists x\in \mathcal{G}^\OPT : g(x)\ge 0$, and therefore $g(x)< 0 \;\forall x\in \mathcal{G}^\OPT$.
    \end{itemize}
    We start by studying the first case. 
    
    Consider a point $x=(x_1,x_2)\in \mathcal{G}^\OPT$, so that $g(x)\hat{F}(x)\ge \widetilde{\OPT}-2\Delta$. Then, by the Lipschitzness of $g(\cdot)$, any point $y$ in
\[
Q_x \coloneqq [x_1,x_1+\Delta]\times [x_2,x_2+\Delta]\cap \mathcal{G}^U(1/\epsilon)
\]
satisfies $g(y)\hat{F}(y)\ge \widetilde{\OPT}-6\Delta$.
Indeed,
\begin{align*}
    \widetilde{\OPT}-2\Delta&\le g(x)\hat{F}(x)\\
    &\le g(x)F(x)+\Delta \\
    &\le g(x)F(y)+\Delta\\
    &\le (g(x)-g(y))F(y)+g(y)F(y)+\Delta\\
    & \le \lVert x-y\rVert_1+ g(y)(F(y)-\hat{F}(y))+g(y)\hat{F}(y)+\Delta\\
    & \le g(y)\hat{F}(y)+4\Delta,
\end{align*}
where we used the definition of the clean event $E_0$, $g(x)\ge0$, $F(y)\le 1$, $F(x)\le F(y)$, the $1$-Lipschitzness of $g$  and $\lVert x-y\rVert_1\le 2\Delta$.
 
We now prove that any two points $x,y\in \mathcal{G}^\OPT$ are connected in $\mathcal{G}$ if $\lVert x-y\rVert_\infty\le \Delta$. There are two possible cases:
\begin{enumerate}
    \item either $x\le y$ or $y\le x$ holds component-wise;
    \item both $x\nleq y$ and $y\nleq x$.
\end{enumerate}
Let us consider the first case and, without loss of generality, assume $x\le y$. Then $y\in Q_x$, and therefore for any $z\in T(x,y)$, both $\mathfrak{I}(z,x)$ and $\mathfrak{I}(z,y)$ are contained in $Q_x$, which means that for every choice of $z\in T(x,y)$ it holds $\tilde{r}(\mathfrak{I}(x,z)\cup \mathfrak{I}(y,z))\ge \widetilde{\OPT}-6\Delta$, and $x$ and $y$ are connected in $\mathcal{G}$.
 
Let us now consider the second case, and, without loss of generality, assume $x_1<y_1$ and $y_2<x_2$. If we define $z\in T(x,y)$ as $z=(y_1,x_2)$, then $\mathfrak{I}(x,z)\in Q_x$ and $\mathfrak{I}(z,y)\in Q_y$, and means that for this choice of $z$ we have $\tilde{r}(\mathfrak{I}(x,z)\cup \mathfrak{I}(y,z))\ge \widetilde{\OPT}-6\Delta$, and $x$ and $y$ are connected in $\mathcal{G}$.
 
Hence, for any pair of points $x,y\in \mathcal{G}^\OPT$ such that $\lVert x-y\rVert_\infty\le \Delta$, there always exists $z \in T(x,y)$ such that $\tilde r(\mathfrak{I}(x,z)\cup \mathfrak{I}(y,z))  \ge \widetilde{\OPT}-6\Delta$.

 %
Therefore, we have proven that any two points in $\mathcal{G}^\OPT$ whose $\ell_\infty$ distance is at most $\Delta$ are connected. The independence number restricted to graph $\mathcal{R}^\OPT$ is therefore upper-bounded by the maximum number of pairwise disjoint $\ell_\infty$-balls of radius $\Delta$ that can simultaneously fit in $[0,1]\times[0,1]$, which is $\mathcal{O}\left(\frac{1}{\Delta^2}\right)$. 

Finally, to conclude the proof we consider the case in which $g(x)<0$ for all $x\in \mathcal{G}^\OPT$. 
The analysis here is specular to the one above, namely, defining for any $x\in \mathcal{G}^\OPT$
\[Q^-_x \coloneqq [x_1-\Delta,x_1]\times [x_2-\Delta,x_2]\cap \mathcal{G}^U(1/\epsilon)\]
we can prove that any $y\in \mathcal{G}^\OPT \cap Q^-_x$ is directly connected to $x$ in $\mathcal{R}(\mathcal{G})$. Indeed, similarly to before

\begin{align*}
    \widetilde{\OPT}-2\Delta&\le g(x)\hat{F}(x)\\
    &\le g(x)F(x)+\Delta \\
    &\le g(x)F(y)+\Delta\\
    & \le g(y)\hat{F}(y)+4\Delta,
\end{align*}
where we used the fact that $F(y)\le F(x)$ and $g(x)<0$.

By adapting the choice of $z\in T(x,y)$ to the new notion of $Q^-_x$, we can retrieve the desired guarantee even in this case, following the same steps above.
\end{proof}

\section{Relative Learning on an Incomplete Grid} \label{sec:relative}

In this section we address the following question: given a grid
$\mathcal{G}\subseteq\mathcal{G}^U(K)$, how can we approximate the distance function
$D(x;y)=F(y)-F(x)$ using as few queries as possible, while restricting all queried
points to $\mathcal{G}$?

When $\mathcal{G}=\mathcal{G}^U(K)$ is the complete grid, this problem is already
solved by \citet{castiglioni2026sample}: estimating $D$ over the full grid is
equivalent to estimating $F$ itself, which costs
$\frac{1}{\epsilon^3}\log(K/\delta\epsilon)^{\mathcal{O}(1)}$ queries.
However, this procedure does not extend to an arbitrary incomplete grid $\mathcal{G}$. Its binary search routines implicitly assume that every point on the path between two
queried locations is itself available to query, which fails as soon as $\mathcal{G}$
contains holes.

We therefore relax the goal: rather than estimating $F$ at every point of
$\mathcal{G}$, we only estimate $D(x;y)$ for pairs $x,y$ that are \emph{directly
connected} in $\mathcal{R}(\mathcal{G})$.
Our procedure is presented in \Cref{alg: RLS}. First, we replace the standard distance-based binary search of \citet{castiglioni2026sample} --- which bisects an interval at its geometric
midpoint ---  with an \emph{index-based} binary search \textsf{BinS} that only bisects at points
already present in $\mathcal{G}$ (\Cref{alg:bins}). We perform this binary search both on the horizontal ($H$) and vertical ($V$) axis. The output of this procedure provides estimation of the probability that $X\sim \cD$ falls in some rectangles (see \Cref{sec:BS}).

Then, we need to use such estimates to approximate the distance between connected points using function \textsf{Estimate-Construction}. We perform this step in two phases. First, we show that any segment
$(a,b] \times c$ (or $c\times [a,b])$  can be decomposed into $O(\log K)$ dyadic sub-intervals for which the
index-based search has already produced estimates, allowing us to assemble
$\widehat{D}(x;y)$ by summing the probability of the $O(\log K)$ added areas (see \Cref{sec: logarithmic comp}). Finally, we extend this distance estimates to general edges by recalling that they are the sum of (at most) two $L$-shaped segments (\Cref{sec:differences}).

\begin{algorithm}[H]
\caption{\textsc{Relative learning sampling (RLS)}}
\label{alg: RLS}
\begin{algorithmic}[1]
\Require grid $\mathcal{G}\subseteq \mathcal{G}^U(K)$, error $\epsilon$

\State $\textsf{BinS}^H(I,\epsilon,\delta/(2|\mathcal{I}^*|),\mathcal{G})$ for all $I\in \mathcal{I}^\star$ \Comment{see \Cref{eq:Istar} for the definition of $\mathcal{I}^\star$}
\State $\textsf{BinS}^V(I,\epsilon,\delta/(2|\mathcal{I}^*|),\mathcal{G})$ for all $I\in \mathcal{I}^\star$
\State\Return $\hat D(\cdot;\cdot)\gets$ \textsc{Estimate-Construction}($\cG$)
\end{algorithmic}
\end{algorithm}



\subsection{From Distance-Based Binary Search to Index-Based Binary Search} \label{sec:BS}

\begin{figure}
\centering

\begin{tikzpicture}[scale=3]

\def\K{4}
\def\step{0.25}

\fill[blue!15] (0.25,0) rectangle (0.75,1);

\foreach \i in {0,1,2,3,4}{
    \draw[gray!50, thin] ({\i*\step},0) -- ({\i*\step},1);
    \draw[gray!50, thin] (0,{\i*\step}) -- (1,{\i*\step});
}
\draw[gray!50, thin] (1,0) -- (1,1);
\draw[gray!50, thin] (0,1) -- (1,1);


\def\subsetpoints{(0.25,0.25),(0.25,0.75),(0.5,0.5),(0.75,0.25),(0.75,1.0),(1.0,0.75)}


\foreach \i in {0,1,2,3,4}{
    \foreach \j in {0,1,2,3,4}{
        \fill[gray!70] ({\i*\step},{\j*\step}) circle (0.8pt);
    }
}

\foreach \pt in {(0.25,0),(0.75,0),(0.5,0.5),(0.25,0.75),(0.25,1),(0.75,0.25),(0.75,1.0),(1.0,0.75), (0.25,0.25), (0.75,0.5), (0,0.5)}{
    \fill[black!80] \pt circle (1.2pt);
}

\node[below] at (0.25,-0.05) {\small $a$};
\node[below] at (0.75,-0.05) {\small $b$};
\node[left] at (-0.05,-0.0) {\small $0$};
\node[left] at (-0.05,1) {\small $1$};
\node[below] at (-0.0,-0.05) {\small $0$};
\node[below] at (1,-0.05) {\small $1$};
\node[below] at (1.15,1+0.075) {\small $y_2$};
\node[below] at (1.15,0.25+0.075) {\small $y_1$};
\node[below] at (1.15,0.075) {\small $y_0$};

\node[right, font=\scriptsize] at (1.25, 0.9) {\textbullet\ $\in \mathcal{G}$};
\node[right, font=\scriptsize, gray!60] at (1.25, 0.75) {\textbullet\ $\notin \mathcal{G}$};

\end{tikzpicture}
\caption{Example of index-based binary search on the column between $a$ and $b$. $y_0$, $y_1$, and $y_2$ denote the three coordinates $y$ for which both $(a,y)$ and $(b,y)$ belong to the grid $\cG$.} \label{fig:bins}
\end{figure}

By symmetry between the two coordinates, we present all procedures along a single axis --- specifically, the vertical-axis --- as extending the procedure to the other axis is straightforward.

The pseudo-code for this procedure is detailed in \Cref{alg:bins}.
 First, fix a column pair $a, b \in G^U(K)$ and let
\[\mathcal{W}=\{w_0<w_1<\cdots<w_M\}\coloneq \left\{w\in G^U(K):(a,w),(b,w)\in\mathcal{G}\right\}\]
be the set of rows where both $(a,\cdot)$ and $(b,\cdot)$ are present in $\mathcal{G}$ (see Figure \ref{fig:bins}). Our goal is to estimate $\mathbb{P}\bigl((a,b]\times[0,w]\bigr)$ for every $w \in \mathcal{W}$ using a number of queries that scales with the total probability mass of $(a,b]\times[0,1]$ rather than with $M$.

The key insight is to perform a binary search over the indices of $\mathcal{W}$ rather than the geometric midpoints of $[0,1]$. Given a range $[w_i, w_j]$, we split at the median index $w_k$ where $k = \lfloor(i+j)/2\rfloor$. By construction, every queried row is guaranteed to belong to $\mathcal{W}$. We then recurse only into sub-ranges that carry non-negligible probability mass. This strategy guarantees that the total number of $\textsc{Estimate}$ calls is governed by the mass of $(a,b]\times[0,1]$ and almost independent of the grid resolution $K$.

\begin{algorithm}[h]
\caption{\textsc{Binary Subdivision on an Incomplete Grid (colums)(\textsc{BinS})}}
\label{alg:bins}
\begin{algorithmic}[1]
    \Require Interval $(a,b]$; accuracy $\epsilon > 0$; confidence $\delta \in (0,1)$; grid $\mathcal{G}\subseteq \mathcal{G}^U(K)$ 
    \Ensure The properties stated in \Cref{thm:binS} are satisfied
    \Statex \hrulefill
    \Function{\textsc{BinS}}{$(a,b]$, $\epsilon$, $\delta$, $\mathcal{G}$}
    \State $\mathcal{W}=\{w_0<w_1<\cdots<w_M\}\coloneq\{w\in G^U(K):(a,w),(b,w)\in\mathcal{G}\}$
    \State $\mathcal{I} \gets \varnothing$
    \If{$\mathcal{W}=\emptyset$} \Comment{Check $\mathcal{W}$ is not empty}
    \State \Return $\mathcal{I}$
    \EndIf
    \State $\widehat{P}((a,b] \times [0,w_0]) \gets \textsc{Estimate}((a,b]\times [0,w_0],\epsilon)$
    \If{$M>0$} \Comment{Check $\mathcal{W}$ contains more than one element}
    \State $\widehat{P}((a,b]\times [0,w_M]) \gets \textsc{Estimate}((a,b]\times [0,w_M],\epsilon)$ \\\Comment{Compute estimates at the endpoints}
    \State $\mathcal{I} \gets $\Call{\textsc{BinS-Rec}}{$(a,b]\times [w_0, w_M], \epsilon, \delta, \mathcal{W}$}
    \EndIf
    \State \Return $\mathcal{I}\cup [0,w_0]$
    \EndFunction
    \Statex \hrulefill
    \Function{\textsc{BinS-Rec}}{$(a,b]\times[w_i, w_j], \epsilon, \delta$, $\mathcal{W}$}
        \If{$\widehat{P}((a,b]\times [0,w_j])-\widehat{P}((a,b]\times [0,w_i])-4 \epsilon \sqrt{\frac{\ln(\nicefrac{4K}{\delta})}{2}} < \epsilon \; \vee \; j=i+1$}
            \State \Return $(w_i,w_j]$
        \EndIf
        \State $k\gets \lfloor (i+j)/2\rfloor$\Comment{Compute mid-point of the interval}
        \State $\widehat{P}((a,b]\times [0,w_k]) \gets \textsc{Estimate}((a,b]\times [0,w_k],\epsilon)$  \label{line:call_Estimate}\Comment{Estimate probability of $(a,b] \times [0,w_k]$}
        \State \Return \Call{\textsf{BinS-Rec}}{$(a,b]\times [w_i, w_k], \epsilon, \delta, \mathcal{W}$} $\cup$ \Call{\textsf{BinS-Rec}}{$(a,b]\times [w_k, w_j], \epsilon, \delta, \mathcal{W}$} \\\Comment{Recursive calls}
    \EndFunction
\end{algorithmic}
\end{algorithm}

Let $\textsc{Extremes}(\mathcal{I})$ denote the union of the endpoints of all
intervals in a collection $\mathcal{I}$. Adapting the analysis of \citep{castiglioni2026sample} to be index-based, we get the following guarantees on the resulting partition.

 \begin{restatable}{lemma}{lemmaBINS}[Adapted from \cite{castiglioni2026sample}] \label{thm:binS}
With probability at least $1-\delta$, the partition $\mathcal{I}$ returned by \Cref{alg:bins} satisfies the following three properties.
\begin{enumerate}
    \item For every $w_i\in \textnormal{\textsc{Extremes}}(\mathcal{I})$ of an interval in $\mathcal{I}$, it holds:
    \[\left| \widehat{P}((a,b]\times [0,w_i])-\mathbb{P}(X\in (a,b]\times [0,w_i])\right|\le \epsilon \, 2\; \sqrt{\frac{\ln(\frac{4K}{\delta})}{2}} .\]
    \item For every interval $I \in \mathcal{I}$ such that $I = (w_i, w_j]$, it holds:
    \[
    \mathbb{P}({X} \in [a,b] \times I) \le \epsilon+\epsilon \, 2^{3} \; \sqrt{\frac{\ln(\frac{4K}{\delta})}{2}} \quad \bigvee \quad w_{i+1}= w_j  
    \]
    \item The cardinality of $\mathcal{I}$ satisfies:
    \[ |\mathcal{I}| \le \frac{2 \mathbb{P}_{X\sim \cD}(X \in (a,b]\times [0,1])\log_2 K}{\epsilon} +2.\]
\end{enumerate}
Moreover, \Cref{alg:bins} uses at most $\frac{1}{\epsilon^2}|\mathcal{I}|$ queries. 
\end{restatable}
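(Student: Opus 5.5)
We first fix a clean event and then derive the three properties and the query bound under it. The key observation is that every rectangle passed to \textsc{Estimate} has the form $(a,b]\times[0,w]$ with $w\in\mathcal{W}\subseteq G^U(K)$. Hence at most $|\mathcal{W}|\le K$ distinct rectangles are ever estimated. We call \Cref{lem:Estimate} with confidence $\delta/(2K)$; a union bound over at most $K$ rectangles (plus the two endpoints) gives an event of probability at least $1-\delta$ on which every estimate satisfies $|\widehat P-\mathbb{P}|\le 2\epsilon\sqrt{\ln(4K/\delta)/2}=:\beta$. Because \textsc{Estimate} is only ever applied to points that become extremes of some returned interval, this already yields property 1. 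Once the recursion tree is fixed, the query bound follows from the fact that each \textsc{Estimate} call costs $1/\epsilon^2$ rounds. The number of calls is (number of leaves)$+1$: a binary recursion with $L$ leaves has $L-1$ internal nodes, and each internal node triggers exactly one call, in addition to the two endpoint calls. So it suffices to count leaves, which are exactly the elements of $\mathcal{I}$, and to absorb the constant into $|\mathcal{I}|$ (up to the $[0,w_0]$ interval).

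For property 2, consider a returned leaf $(w_i,w_j]$. Either $j=i+1$, which is the second disjunct, or the stopping test fired: $\widehat P(\cdot,w_j)-\widehat P(\cdot,w_i)<\epsilon+2\beta$. Both endpoints are estimated on the clean event, so the true mass $\mathbb{P}((a,b]\times(w_i,w_j])$ is at most $\epsilon+4\beta=\epsilon+2^3\epsilon\sqrt{\ln(4K/\delta)/2}$. This matches the constant in the statement, where $[a,b]$ vs.\ $(a,b]$ is understood as the half-open strip.

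Property 3 is the main obstacle, and we would follow the depth-layered counting of \citet{castiglioni2026sample}, adapted to index splitting. Splitting at the median index halves the index range, so the recursion depth is at most $\lceil\log_2 M\rceil\le\log_2 K$. At a fixed depth, the index ranges of the nodes are disjoint except at their endpoints, so the half-open strips $(a,b]\times(w_i,w_j]$ are disjoint. An internal node is split only if the test fails, i.e.\ its estimated mass is at least $\epsilon+2\beta$, and on the clean event its true mass is then at least $\epsilon$. Hence the number of split nodes at each depth is at most $\mathbb{P}((a,b]\times[0,1])/\epsilon$, and the total number of internal nodes is at most $\mathbb{P}\log_2K/\epsilon$. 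Since leaves number internal nodes $+1$, together with the extra $[0,w_0]$ interval this gives $|\mathcal{I}|\le 2\mathbb{P}\log_2 K/\epsilon+2$. The factor 2 leaves room for the fact that each split produces two children.

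The points needing care are, first, that leaves created by the $j=i+1$ rule are counted only as children of split nodes, which the argument above handles. Second, the degenerate cases $\mathcal{W}=\emptyset$ and $M=0$ must satisfy the bounds trivially. Third, the confidence parameter inside the stopping rule (written with $4K/\delta$) has to match the one used in the union bound; we choose the per-call confidence accordingly, so that property 1's constant holds exactly.
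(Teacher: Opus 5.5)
Your proposal is correct and follows essentially the same route as the paper. Property~1 comes from a union bound over the at most $K$ rectangles indexed by $\mathcal{W}$. Property~2 comes from the stopping rule combined with accurate estimates at both endpoints. Property~3 and the query bound come from the same depth-layered count of split nodes, each carrying true mass at least $\epsilon$ on disjoint strips at a fixed depth, with leaves equal to splits plus one (plus the extra $[0,w_0]$ interval).
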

Intuitively, the lemma establishes three key properties:
\begin{enumerate}
    \item Every endpoint produced by the recursion carries an $\epsilon$-accurate probability estimate.
    \item Every leaf interval either contains negligible probability mass or corresponds to two adjacent rows of $\mathcal{W}$ that cannot be split further.
    \item The total number of leaves --- and consequently, the total number of $\textsc{Estimate}$ calls --- scales with the probability mass of the column $(a,b]$ rather than the grid resolution $K$.
\end{enumerate}

Because \Cref{alg:bins} only computes estimates at rows within $\textsc{Extremes}(\mathcal{I})$, querying an arbitrary row $w \in \mathcal{W}$ requires interpolating between the two extremes of the interval containing $w$. This interpolation is handled by the \textsc{Compute-Prob} procedure (\Cref{alg:compute-prob}). By \Cref{thm:binS}, the total probability mass fluctuates by at most $\epsilon$ between these two extremes; thus, the interpolated probability is guaranteed to remain $\epsilon$-accurate.

\begin{algorithm}[H]\caption{\textsc{Compute-Prob}} \label{alg:compute-prob}
    \begin{algorithmic}[1]
        \State Input $(a,b]\subseteq [0,1], w\in [0,1]$
        \State $w_2\gets $ minimum $w'\ge w$ such that $\exists \,\hat{P}((a,b]\times [0,w'])$
        \State $w_1\gets $ maximum $w'\le w$ such that $\exists \,\hat{P}((a,b]\times [0,w'])$
        \State\Return $\frac{\hat{P}((a,b]\times [0,w_1])+\hat{P}((a,b]\times [0,w_2])}{2}$
    \end{algorithmic}
\end{algorithm}

\begin{lemma}
\label{lemma: 5.2}
Let $(a,b]$ be the input to \Cref{alg:bins} and $\hat P$ be the computed estimation function.  Then, with probability at least $1-\delta$, for any $w$ such that $(a,w),(b,w)\in \mathcal{G}$, it holds: 
    \begin{align*}
        \left|\mathbb{P}((a,b]\times [0,w])-\frac{\hat{P}((a,b]\times [0,w_1])+\hat{P}((a,b]\times [0,w_2])}{2}\right|\le \frac{\epsilon+\epsilon \, 2^{3} \; \sqrt{\frac{\ln(\frac{4K}{\delta})}{2}}}{2}
    \end{align*}
\end{lemma}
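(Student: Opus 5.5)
We will work on the high-probability event of \Cref{thm:binS}, on which properties 1 and 2 hold simultaneously for every interval of the returned partition $\mathcal{I}$. Write $\beta \coloneq 2\sqrt{\ln(4K/\delta)/2}$ and $P(u) \coloneq \mathbb{P}(X\in(a,b]\times[0,u])$. Property 1 gives $|\hat P(u)-P(u)|\le \epsilon\beta$ at every extreme $u$, and $P$ is nondecreasing in $u$.

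Fix $w\in\mathcal{W}$. First we locate $w$ relative to the partition. Since $\mathcal{I}$ consists of $[0,w_0]$ together with consecutive intervals $(w_i,w_j]$ covering $(w_0,w_M]$, and all their endpoints lie in $\mathcal{W}$, the values $w_1,w_2$ chosen by \textsc{Compute-Prob} are well defined.

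\textbf{Case $w$ is an extreme.} Then $w_1=w_2=w$, and the error is at most $\epsilon\beta$. This is within the claimed bound once the constants are compared; I expect the bound as stated to be read with the $\epsilon\beta$ slack of property 1 absorbed into the $2^3$ term.

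\textbf{Case $w$ lies strictly inside a leaf $(w_1,w_2]$.} Here $w_1<w<w_2$. Because $w\in\mathcal{W}$ is strictly between them, the indices are not adjacent, so $w_{i+1}\neq w_j$. Property 2 then forces $m\coloneq P(w_2)-P(w_1)\le \epsilon+4\epsilon\beta$.

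By monotonicity, $P(w)\in[P(w_1),P(w_2)]$, so $|P(w)-\tfrac{P(w_1)+P(w_2)}{2}|\le m/2$. Adding the estimation errors via the triangle inequality gives total error at most $m/2+\epsilon\beta$, since the average of two estimates each off by at most $\epsilon\beta$ is off by at most $\epsilon\beta$. This yields at most $\tfrac{\epsilon+4\epsilon\beta}{2}+\epsilon\beta=\tfrac{\epsilon+6\epsilon\beta}{2}$. This is below $\tfrac{\epsilon+\epsilon 2^3\sqrt{\ln(4K/\delta)/2}}{2}=\tfrac{\epsilon+4\epsilon\beta}{2}$ only if we recheck how property 2's constant was defined.

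The main obstacle is therefore this bookkeeping of constants. Property 2's bound $\epsilon+\epsilon 2^3\sqrt{\cdot}$ already accounts for the estimation slack of the stopping test, which subtracts $4\epsilon\sqrt{\cdot}$. I would redo that step directly from the stopping rule of \textsc{BinS-Rec}. The rule stops when $\hat P(w_j)-\hat P(w_i)-2\epsilon\beta<\epsilon$, and by property 1 this gives $m\le \epsilon+4\epsilon\beta$. I would then bound the interpolation error more tightly, as $\tfrac12|(\hat P(w_1)-P(w))+(\hat P(w_2)-P(w))|$, using that one term is pushed up and the other down by monotonicity. If the stated constant still does not close, the honest conclusion is the bound up to an absolute constant factor, which is all that is used downstream.

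Finally, the probability statement follows since everything is deterministic on the event of \Cref{thm:binS}, which holds with probability at least $1-\delta$.
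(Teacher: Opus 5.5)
\textbf{There is a gap: for interior $w$ your argument proves a weaker constant than the statement claims.} Your setup is the right one, and it is the same as the paper's. The paper gives no formal proof, only the one-sentence remark that, by \Cref{thm:binS}, the mass between consecutive extremes is small, so the interpolation is accurate. You work on the event of \Cref{thm:binS}, note that $w_1,w_2$ are consecutive extremes, and sandwich $P(w)$ between $P(w_1)$ and $P(w_2)$ by monotonicity. Your extreme case is fine as it stands, since $\epsilon\beta\le\frac{\epsilon+4\epsilon\beta}{2}$ holds outright, with no reinterpretation needed.

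For an interior $w$, however, you only reach $\frac{\epsilon+6\epsilon\beta}{2}$ and then retreat to ``up to an absolute constant'', which is not the statement. The stated constant is attainable, and neither of your repairs gets there. Rederiving $m\le\epsilon+4\epsilon\beta$ from the stopping rule just reproduces property~2. The monotonicity refinement still bounds the two endpoint errors independently of $m$, so it again gives $\epsilon\beta+m/2$.

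\textbf{The loss is double counting.} Write $e_k=\hat P(w_k)-P(w_k)$. Then $m=\bigl(\hat P(w_2)-\hat P(w_1)\bigr)+e_1-e_2$, so property~2 pays $2\epsilon\beta$ in the configuration $e_1=\epsilon\beta$, $e_2=-\epsilon\beta$. In exactly that configuration the two errors cancel in the average $\hat E=\frac{\hat P(w_1)+\hat P(w_2)}{2}$, so you should not pay $\epsilon\beta$ a second time.

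\textbf{The fix is to never pass through the true mass $m$.} Since $w\in\mathcal W$ lies strictly between the endpoints of its leaf, those indices are not adjacent. So the leaf was returned by the mass test, which gives $\hat P(w_2)-\hat P(w_1)<\epsilon+2\epsilon\beta$. Compare with the endpoint on the appropriate side:
\begin{align*}
\hat E-P(w)&\le \hat E-P(w_1)= e_1+\tfrac12\bigl(\hat P(w_2)-\hat P(w_1)\bigr)<\epsilon\beta+\tfrac12(\epsilon+2\epsilon\beta)=\tfrac12(\epsilon+4\epsilon\beta),\\
P(w)-\hat E&\le P(w_2)-\hat E= -e_2+\tfrac12\bigl(\hat P(w_2)-\hat P(w_1)\bigr)<\tfrac12(\epsilon+4\epsilon\beta).
\end{align*}
Hence $|P(w)-\hat E|<\frac{\epsilon+4\epsilon\beta}{2}=\frac{\epsilon+8\epsilon\sqrt{\ln(4K/\delta)/2}}{2}$, which is exactly the claim. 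Each inequality involves only one estimation error plus half of the empirical gap certified by the stopping rule. This is also what makes the paper's one-line justification rigorous once the estimation error at the endpoints is accounted for.
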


Crucially, \textsc{Compute-Prob} can only be applied to columns $(a,b]$ for which \textsc{BinS} has been executed. Consequently, it remains to determine exactly which columns and rows should undergo the \textsc{BinS} procedure. In the next section, we show that there exists a small set of columns  $\mathcal{I}^\star$  (on which we will run \textsc{Compute-Prob}) such that  every distance between two columns can be decomposed into a logarithmic number of columns in $\mathcal{I}^\star$.

\subsection{Decomposing the Distance in Logarithmic Components}
\label{sec: logarithmic comp}

\begin{figure}
    \centering
\begin{tikzpicture}[scale=0.8]
    \draw(0,0) rectangle (16,1);
    \draw(0,1) rectangle (16,2);
    \draw(0,2) rectangle (16,3);
    \draw(0,3) rectangle (16,4);

    \draw[] (8,0)--(8,4);
    
    \draw[] (4,1)--(4,4);
    \draw[] (12,1)--(12,4);

    \draw[] (2,2)--(2,4);
    \draw[] (6,2)--(6,4);
    \draw[] (10,2)--(10,4);
    \draw[] (14,2)--(14,4);

\draw[] (1,3)--(1,4);
    \draw[] (3,3)--(3,4);
    \draw[] (5,3)--(5,4);
    \draw[] (7,3)--(7,4);
    \draw[] (9,3)--(9,4);
    \draw[] (11,3)--(11,4);
    \draw[] (13,3)--(13,4);
    \draw[] (15,3)--(15,4);

    \filldraw[fill= RedOrange] (4,1) rectangle (8,2);
    \filldraw[fill= Periwinkle] (2,2) rectangle (4,3);
    \filldraw[color=black,fill= Periwinkle] (1,3) rectangle (2,4);
    \filldraw[fill= PineGreen] (8,3) rectangle (9,4);

    \draw[] (1,-1) rectangle (9,-2);

    \filldraw[color= black, thick, fill= RedOrange] (4,-1) rectangle (8,-2);
    \filldraw[color= black, thick,fill= Periwinkle] (2,-1) rectangle (4,-2);
    \filldraw[color= black, thick,color=black,fill= Periwinkle] (1,-1) rectangle (2,-2);
    \filldraw[color= black, thick,fill= PineGreen] (8,-1) rectangle (9,-2);

    \draw[dashed] (-0.1,-0.1) rectangle (16.2,4.2); 
\node[] at (8,4.5) {$\mathcal{I}^\star$};
    \draw [decorate,decoration={brace,amplitude=5pt,mirror,raise=4ex}]
  (1,-1.5) -- (9,-1.5) node[midway,yshift=-3em]{$[x,y]$};

\end{tikzpicture}
\caption{Representation of the interval (x,y) in $\mathcal{I}^\star$, the orange interval is the largest one $(a',b']$, the anchor, the blue interval represent the decomposition $(a, a']$ and the green one the decomposition $(b', b]$}
    \label{fig:placeholder}
\end{figure}

While \Cref{thm:binS} provides $\epsilon$-accurate estimates of $\mathbb{P}((a,b]\times[0,w])$, it only applies to column intervals $(a,b]$ for which \textsc{BinS} has explicitly been executed. Running \textsc{BinS} on every possible pair of grid columns is computationally prohibitive. To bypass this bottleneck, we exploit a dyadic structure. A key insight in the analysis of \citet{castiglioni2026sample} is that any interval can be decomposed into $O(\log K)$ dyadic pieces whose sizes are powers of two. We show that a similar decomposition applies to the difference between two arbitrary grid points rather than just the distance from a single point to the origin. This extends the framework of \citet{castiglioni2026sample} --- which is restricted to intervals of the form $[0,a]$ --- to arbitrary intervals $[a,b]$.

We define the set $\mathcal{I}^\star$ on which the procedures bins will be run (both on the horizontal and vertical axis:
\begin{align} \label{eq:Istar}
        \mathcal{I}^\star\coloneq \bigcup_{j\in \left\{0,\ldots,\log_2(K-1)\right\}}\bigcup_{i\in \left\{0,\ldots,\frac{(K-1)}{2^j}-1\right\}}\left\{\bigg(i2^j\frac{1}{K-1},(i+1)2^j\frac{1}{K-1}\bigg]\right\}.
    \end{align}
Notice that $|\mathcal{I}^\star|\le 2K$.\footnote{To account for the interval ${0}$, we also run the procedure on the degenerate interval $(0^-,0]={0}$, where, by convention, $\mathbb{I}(x\le(0^-,w))=0$ deterministically for all $w\in [0,1]$. This modification does not affect the analysis or the order of the guarantees.}
Then, we can prove the following:

\begin{lemma}\label{lemma: I star}
    Let $a,b\in G^U(K)=\{\frac{i}{K-1}: i=0,\ldots,K-1\}$ such that $a<b$. Then Algorithm \ref{alg:computeinterval} on input ($a,b$) returns a collection of at most $2\log_2(K)$ intervals in $\mathcal{I}^\star$ that partition $(a,b]$. 
\end{lemma}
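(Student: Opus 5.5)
Algorithm~\ref{alg:computeinterval} is not shown, so I take it to be the standard dyadic decomposition. Work in integer coordinates: write $a=\frac{p}{K-1}$, $b=\frac{q}{K-1}$ with $0\le p<q\le K-1$, and assume, as the definition of $\mathcal{I}^\star$ implicitly does, that $K-1$ is a power of two, say $K-1=2^L$. In these units, $\mathcal{I}^\star$ is exactly the set of dyadic intervals $(i2^j,(i+1)2^j]$ with $0\le j\le L$ that lie inside $(0,2^L]$.

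\textbf{Step 1: the anchor.} I take the anchor to be the largest dyadic interval contained in $(p,q]$. Let $j^\star$ be the largest $j$ for which some multiple of $2^{j}$ lies in $(p,q]$. If this multiple is unique, call it $m$.
Maximality of $j^\star$ implies that $(p,q]$ contains at most one multiple of $2^{j^\star}$. Otherwise two consecutive multiples $m$ and $m+2^{j^\star}$ would both lie in the interval, and one of them would be a multiple of $2^{j^\star+1}$. So $m$ is well defined, and
\[
m-2^{j^\star}\le p<m,\qquad m<q\le m+2^{j^\star}\ \text{(or $q=m$)}.
\]
The interval then splits as $(p,q]=(p,m]\cup(m,q]$.

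\textbf{Step 2: the right part $(m,q]$.} Write $q-m=\sum_{k\in S}2^k$ in binary. Every $k\in S$ satisfies $k<j^\star$, or else $S=\{j^\star\}$ with $q=m+2^{j^\star}$. Peel the bits greedily from the largest down: $(m,m+2^{k_1}]$, then $(m+2^{k_1},m+2^{k_1}+2^{k_2}]$, and so on.
Each piece is dyadic. The reason is that the left endpoint of the piece of size $2^{k_r}$ is $m$ plus larger powers of two, and all of these are divisible by $2^{k_r}$ because $m$ is a multiple of $2^{j^\star}\ge 2^{k_r}$. Each piece also lies in $(0,2^L]$, so it belongs to $\mathcal{I}^\star$.

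\textbf{Step 3: the left part $(p,m]$.} This is symmetric. Write $m-p$ in binary and peel from the right end $m$ downward, taking the largest power first. A piece of size $2^{k}$ has right endpoint $m$ minus larger powers, so that endpoint is divisible by $2^k$ and the piece is dyadic. The pieces are disjoint and consecutive, so together with Step 2 they partition $(p,q]$.

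\textbf{Step 4: counting.} Each binary expansion has at most $j^\star+1\le L+1$ terms. The naive total is $2(L+1)=2\log_2(K-1)+2$, which can exceed $2\log_2 K$ by a small additive amount. This is where I expect the only real friction. To tighten it, I would note that $m-p\le 2^{j^\star}$ and $q-m\le 2^{j^\star}$, with a single piece whenever equality holds. Beyond that case, each side uses only bits $<j^\star$, which gives at most $j^\star$ pieces per side, hence at most $2j^\star\le 2L=2\log_2(K-1)\le 2\log_2 K$ pieces in total. The equality cases give at most $1+j^\star$ pieces, which is also fine when $L\ge1$.

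The remaining work is routine. I need to check the edge case $p=0$, which is consistent with the degenerate-interval footnote, and match these steps to the actual pseudocode of Algorithm~\ref{alg:computeinterval}.
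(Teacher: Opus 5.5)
Your proof is correct and follows essentially the paper's route. The paper's pivot, the unique multiple of $2^{\lfloor \log_2 L\rfloor}$ in $(u,v]$ next to which it places its anchor, is exactly your maximal-valuation point $m$, and its greedy peeling away from the anchor is your binary expansion of $m-p$ and $q-m$; your bit count $2j^\star\le 2\log_2(K-1)$ is more careful than the paper's per-side count. To tie your decomposition to Algorithm~\ref{alg:computeinterval}, note that the largest dyadic interval in $(p,q]$ always has $m$ as an endpoint, with length equal to the largest power of two in $m-p$ or in $q-m$, so the algorithm's two while-loops output exactly your pieces.
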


\begin{proof}
The proof mirrors the algorithm's execution and proceeds in two phases: first we
extract a single large sub-interval covering at least a quarter of the target interval,
then we recursively decompose the two leftover boundary segments.

\paragraph{Phase 1: Finding the First Main Sub-Interval.
}
Write $u = (K-1)a$ and $v = (K-1)b$ for the endpoints of $(a,b]$ rescaled to the
integer grid, and let $L = v - u = (K-1)(b-a)$ denote the rescaled length. Let
$2^j$ be the largest power of two not exceeding $L$, i.e., $j = \lfloor \log_2
L\rfloor$. 


To justify the extraction of the first sub-interval, note that because $L \ge 2^{j}$ if $(u,v]$ does not contain a full dyadic interval of size $2^{j}$, the interval $(u,v]$ must contain exactly one multiple of $2^j$, which we denote as $m \cdot 2^j$. This grid point defines two adjacent canonical dyadic intervals of size $2^{j-1}$ in $\mathcal{I}^\star$:
\[
I_{\text{left}} = (m2^j - 2^{j-1}, m2^j] \quad \text{and} \quad I_{\text{right}} = (m2^j, m2^j + 2^{j-1}].
\]
Because the total length $L \ge 2^j = 2 \cdot 2^{j-1}$, then $(u, v]$ must fully enclose at least one of these two canonical intervals. Let $(\frac{n_1}{K-1}, \frac{n_2}{K-1}] \in \mathcal{I}^\star$ be this enclosed interval, where $n_2 - n_1 = 2^{j-1}$ and $u \le n_1 < n_2 \le v$.

\paragraph{Phase 2: Recursive Decomposition of Leftover Intervals.}

After removing the main sub-interval, two boundary segments remain, $(a,
\frac{n_1}{K-1}]$ and $(\frac{n_2}{K-1}, b]$, whose inner endpoints $n_1, n_2$
are already aligned with the dyadic grid. We decompose each recursively; by
symmetry it suffices to treat $(a,\frac{n_1}{K-1}]$.

Let $n^{(0)} = n_1$ be the current right anchor. Since $n^{(0)}$ is, by
construction, a multiple of $2^{j-1}$, it is in particular a multiple of every
smaller power $2^{i}$, $i \le j-1$. We peel off valid sub-intervals from right to
left: at step $k$, given anchor $n^{(k)}$, we do the following:
\begin{enumerate}
    \item Let $2^{i_k}$ be the largest power of two fitting in the remaining
    rescaled gap, i.e., $i_k = \lfloor \log_2\big(n^{(k)}-a(K-1)\big)\rfloor$.
    \item Since, as we observed, $n^{(k)}$ is a multiple of $2^{i_k}$, the point $n^{(k+1)} := n^{(k)} - 2^{i_k}$ is again a
    valid dyadic grid point, and the interval $\big(\tfrac{n^{(k+1)}}{K-1},
    \tfrac{n^{(k)}}{K-1}\big]$ belongs to $\mathcal{I}^\star$.
    \item The remaining gap at least halves at each step:
    \begin{equation*}
    n^{(k+1)} - (K-1)a \;\le\; \tfrac12\big(n^{(k)} - (K-1)a\big).
    \end{equation*}
\end{enumerate}
Iterating, the remaining gap $n^{(k)} - (K-1)a$ shrinks geometrically and
reaches $0$ after at most $\lfloor \log_2((b-a)(K-1))\rfloor$ steps. Hence
$(a,\tfrac{n_1}{K-1}]$ decomposes into at most $\lfloor
\log_2((b-a)(K-1))\rfloor - 1$ elements of $\mathcal{I}^\star$.

By the symmetric argument, $(\tfrac{n_2}{K-1}, b]$ also decomposes into at most
$\lfloor \log_2((b-a)(K-1))\rfloor - 1$ elements of $\mathcal{I}^\star$.

Combining the single interval from Phase 1 with the two recursive
decompositions, $(a,b]$ decomposes into at most
\begin{equation*}
1 + 2\Big(\lfloor \log_2((b-a)(K-1))\rfloor - 1\Big) \;\le\; 2\log_2(K)
\end{equation*}
elements of $\mathcal{I}^\star$, which completes the proof.

\end{proof}

\begin{algorithm}[h]
    \caption{\textsc{ComputeInterval}}\label{alg:computeinterval}
    \begin{algorithmic}[1]
        \Require $a, b \in \left\{ \frac{i}{K-1} \mid i = 0, \ldots, K-1 \right\}: a \leq b$
        \Function{\textsc{ComputeInterval}}{$a,b$}
        \If{$a=b=0$}
        \State \Return $\{0\}$
        \ElsIf{$a=b\neq 0$}
        \State \Return $\mathcal{I}=\emptyset$
        \EndIf
        \State $I \gets $ Largest interval $I = \left( \frac{i \cdot 2^j}{K-1}, \frac{(i+1) \cdot 2^j}{K-1} \right]$ with $i,j \in \mathbb{N}$ such that $I \subseteq [a,b]$
        \State $\mathcal{I} \gets \{ I \}$ \Comment{$\mathcal{I}$ is the collection of all the intervals computed by the procedure}
        \State Let $a', b' \in G^U(K)=\left\{ \frac{i}{K-1} \mid i = 0, \ldots, K-1 \right\}$ be the extremes of $I$, so that $I = [a',b']$
        \While{$a' > a$} \Comment{Cover the sub-interval $[a,a')$ on the left of $I$}
            \State $j \gets \max \left\{ j \in \mathbb{N} \mid a' - \frac{2^j}{K-1}  \geq a \right\}$
            \State $\mathcal{I} \gets \mathcal{I} \cup \left\{ \left(a' - \frac{2^j}{K-1}, a' \right] \right\}$
            \State $a' \gets a' - \frac{2^j}{K-1}$
        \EndWhile
        \While{$b' < b$} \Comment{Cover the sub-interval $(b,b']$ on the right of $I$}
            \State $j \gets \max \left\{ j \in \mathbb{N} \mid b' + \frac{2^j}{K-1}  \leq b \right\}$
            \State $\mathcal{I} \gets \mathcal{I} \cup \left\{ \left(b', b' + \frac{2^j}{K-1} \right] \right\}$
            \State $b' \gets b' + \frac{2^j}{K-1}$
        \EndWhile
        \State \Return $\mathcal{I}$
        \EndFunction
    \end{algorithmic}
\end{algorithm}

\subsection{Reconstruct the Differences From Sampling} \label{sec:differences}

We now show how to combine \textsc{BinS} and \textsc{ComputeInterval} to
construct the distance function returned by \textsc{RLS}, whose pseudocode is
given in \Cref{alg: RLS}. This construction is carried out by the procedure
\textsc{EstimateConstruction} in \Cref{alg: estimate_construction}.

Fix two points $x = (x_1, x_2)$ and $y = (y_1, y_2)$ that are directly connected
in $\mathcal{R}(\mathcal{G})$, and let $z \in T(x,y)$ be the corresponding
corner point (\Cref{eq:corner}). We decompose the distance between $x$
and $y$ along the two axes, reducing the problem to estimating the CDF differences of a
single one-dimensional segment on each axis. By \Cref{lemma: I star}, each such
segment can be decomposed into at most $2\log_2 K$ intervals of
$\mathcal{I}^\star$, each of which is passed to \textsc{BinS} in
\Cref{alg: RLS}. We estimate the probability mass of each interval via
\textsc{Compute-Prob} and sum the resulting estimates to obtain an estimate of
the segment's length; the overall distance estimate $\hat{D}(x;y)$ is then the
sum of the two per-axis estimates. Overall, we get the following guarantees for \textsc{EstimateConstruction} and hence \textsc{RLS}:

\begin{lemma} \label{lemma: dif}
\Cref{alg: RLS} uses at most $\tilde{\mathcal{O}}\left(\dfrac{\log(K)}{\epsilon^3}+\dfrac{K}{\epsilon^2}\right)$
samples, and with probability at least $1-\delta$ outputs, for every pair of points $x, y \in \mathcal{G}$ which are connected in $\mathcal{R}(\cG)$, an estimate $\hat{D}(x;y)$
satisfying
\begin{equation*}
\left|\hat{D}(x;y) - D(x;y)\right| \le \tilde{\mathcal{O}}\left(\epsilon\log_2(K/\delta)\right).
\end{equation*}
\end{lemma}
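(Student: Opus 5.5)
The proof splits into two parts: the sample count, which follows from the per-call guarantees of \textsc{BinS}, and the accuracy of $\hat D$, which follows by decomposing each edge into at most two axis-parallel segments, each segment into $O(\log K)$ dyadic pieces, and each piece into an interpolated \textsc{BinS} estimate.

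\textbf{Sample count.} By the third item of \Cref{thm:binS}, a single call $\textsf{BinS}^V(I,\epsilon,\delta',\mathcal{G})$ on $I\in\mathcal{I}^\star$ uses at most
\[
\frac{1}{\epsilon^2}\Bigl(\frac{2\,\mathbb{P}(X\in I\times[0,1])\log_2 K}{\epsilon}+2\Bigr)
\]
queries. We then sum over $\mathcal{I}^\star$ level by level. At a fixed dyadic level $j$ in \eqref{eq:Istar}, the intervals are disjoint, so their column masses sum to at most $1$. There are $\log_2(K-1)+1$ levels, so the mass terms contribute at most $O(\log^2 K/\epsilon^3)$. The additive $+2$ terms contribute $O(|\mathcal{I}^\star|/\epsilon^2)=O(K/\epsilon^2)$, since $|\mathcal{I}^\star|\le 2K$. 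The same bound holds for $\textsf{BinS}^H$, which gives the claimed $\tilde{\mathcal{O}}(\log K/\epsilon^3+K/\epsilon^2)$. The polylogarithmic factors are absorbed into $\tilde{\mathcal{O}}$; the confidence $\delta/(2|\mathcal{I}^\star|)$ enters only through these logarithms.

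\textbf{Good event.} Each of the $2|\mathcal{I}^\star|$ calls is run with confidence $\delta/(2|\mathcal{I}^\star|)$. A union bound therefore shows that, with probability at least $1-\delta$, the conclusions of \Cref{thm:binS} and \Cref{lemma: 5.2} hold simultaneously for every call. We work on this event from now on.

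\textbf{Accuracy.} Fix an edge $(x,y)$ of $\mathcal{R}(\mathcal{G})$ with corner $z\in T(x,y)$, so that $\mathfrak{I}(x,z)\cup\mathfrak{I}(z,y)\subseteq\mathcal{G}$. Write $D(x;y)=D(x;z)+D(z;y)$; each term is a CDF difference between two points on one axis-parallel line. Take a vertical segment from $(c,a)$ to $(c,b)$ with $a<b$; it has
\[
F(c,b)-F(c,a)=\mathbb{P}\bigl(X\in[0,c]\times(a,b]\bigr),
\]
which is a column/row mass of the form handled by the horizontal-axis version of \textsc{BinS}.

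By \Cref{lemma: I star}, $(a,b]$ is partitioned into at most $2\log_2 K$ intervals $I\in\mathcal{I}^\star$, and the mass of the segment is the sum of the masses $\mathbb{P}(X\in[0,c]\times I)$. Each such term is estimated by \textsc{Compute-Prob} on $I$ at position $c$. \Cref{lemma: 5.2} applies provided both endpoints of $I$ at coordinate $c$ lie in $\mathcal{G}$. Summing, the segment error is at most $2\log_2 K\cdot O(\epsilon\sqrt{\log(K/\delta)})$. Adding the two segments and tracking the sign convention of $D(x;y)$ (whether the segment goes up or down) gives the stated $\tilde{\mathcal{O}}(\epsilon\log_2(K/\delta))$. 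The degenerate interval $\{0\}$ from the footnote handles the case in which a coordinate equals $0$.

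\textbf{Main obstacle.} The delicate step is checking that \Cref{lemma: 5.2} applies to every dyadic piece. We need the endpoints of each $I$ produced by \Cref{alg:computeinterval}, placed at the fixed coordinate $c$, to belong to $\mathcal{G}$. This holds because all these endpoints lie in $\{c\}\times[a,b]$, which is the segment $\mathfrak{I}(x,z)$ (plus its endpoint $x$ or $z$). Both endpoints of that segment are in $\mathcal{G}$, and its interior grid points are in $\mathcal{G}$ by the edge condition of \Cref{def:graph}. I would verify this carefully, including the open-left endpoint convention of $I(\cdot,\cdot)$, since it is exactly where the incomplete grid matters.
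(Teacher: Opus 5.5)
Your route is the same as the paper's. You get the sample count by summing column masses level by level over $\mathcal{I}^\star$, take a union bound over the $2|\mathcal{I}^\star|$ calls, and split $D(x;y)=D(x;z)+D(z;y)$, cutting each segment into at most $2\log_2 K$ dyadic pieces controlled by \Cref{lemma: 5.2}.

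The step that fails is the one you flagged as the main obstacle. You claim that both endpoints of each segment, in particular the corner $z$, lie in $\mathcal{G}$. \Cref{def:graph} does not give this. Since $\mathfrak{I}$ is built from the half-open $I(a,b)=(\min\{a,b\},\max\{a,b\}]$, each segment contains only its upper or right endpoint. Take $x,y$ incomparable with $z$ their componentwise minimum, say $x_1<y_1$, $y_2<x_2$, $z=(x_1,y_2)$. Then $\mathfrak{I}(x,z)=\{x_1\}\times(y_2,x_2]$ and $\mathfrak{I}(y,z)=(x_1,y_1]\times\{y_2\}$ (on the grid), and neither contains $z$.

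Yet \Cref{lemma: 5.2} needs $z$ in two places:
\begin{itemize}
\item the lowest dyadic piece $(y_2,b']$ of the vertical segment is evaluated at first coordinate $x_1$, which requires $(x_1,y_2)=z\in\mathcal{G}$;
\item the leftmost piece $(x_1,b']$ of the horizontal segment, evaluated at height $y_2$, needs the same point.
\end{itemize}
If $z\notin\mathcal{G}$, the required coordinate is not in the corresponding $\mathcal{W}$. \textsc{Compute-Prob} then interpolates between consecutive elements of $\mathcal{W}$ that may be far apart and carry large mass; item 2 of \Cref{thm:binS} explicitly allows this. So no $\tilde{\mathcal{O}}(\epsilon)$ bound follows for such an edge. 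The configuration does arise: in the $g<0$ case of \Cref{lemma: E0}, the edge is certified exactly through the minimum corner, and \textsf{BuildGraph} adds only the two half-open segments.

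The paper's written proof does not close this gap either. It treats only $x\le y$ with $z=(x_1,y_2)$, where $z\in\mathfrak{I}(x,z)$, and calls the remaining configurations analogous; the minimum-corner case is not analogous.

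The repair is in the definitions, not the argument. Require the corner itself to lie in $\mathcal{G}$ (closed L-paths in \Cref{def:graph}), and let \textsf{BuildGraph} add $z$ together with the two segments. This does not hurt \Cref{lemma: E0}: the corners used there lie in $Q_x$ or $Q^-_x$, so they pass the $6\Delta$ threshold of \textsf{BuildGraph}. For comparable pairs, and for the maximum corner of an incomparable pair, $z$ already belongs to one of the half-open segments, so nothing changes there. Once $z\in\mathcal{G}$ is guaranteed, your membership check and the rest of your proof go through as written.
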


\begin{proof}

We prove separately the sample complexity bound and the approximation guarantees. In the following, we assume that  \Cref{thm:binS} holds, and hence our results will hold simultaneously for all call of the \textsf{BinS} procedure, with probability at least $1-\delta$, taking the union bound over all $2|\mathcal{I}^\star|$ events, each of probability at least $1-\frac{\delta}{2|\mathcal{I}^\star|}$.

\paragraph{Sample Complexity.}

\Cref{alg: RLS} invokes
$\textsf{BinS}^H(I,\epsilon,\delta/(2|\mathcal{I}^\star|),\mathcal{G})$ and
$\textsf{BinS}^V(I,\epsilon,\delta/(2|\mathcal{I}^\star|),\mathcal{G})$
for every interval $I \in \mathcal{I}^\star$. By \Cref{thm:binS}, each
invocation requires at most
\[
\frac{2\mathbb{P}(X \in I \times [0,1])\log_2 K}{\epsilon^3} + \frac{2}{\epsilon^2}
\]
samples. Since \textsc{Estimate-Construction}$(\mathcal G)$ performs no
additional sampling, it suffices to bound 
\[\sum_{I \in \mathcal{I}^\star}
\mathbb{P}(X \in I \times [0,1]).\]

The family $\mathcal{I}^\star$ is organized into $\log_2(K-1)+1$ dyadic
levels, and at every level the intervals partition $(0,1]$, so
\begin{align*}
\sum_{I \in \mathcal{I}^\star} \mathbb{P}(X \in I \times [0,1])
&= \sum_{j=0}^{\log_2(K-1)}
\left(\sum_{i=0}^{\frac{K-1}{2^j}-1}
\mathbb{P}\!\left(X \in \left(\tfrac{i  2^j}{K-1}, \tfrac{(i+1)2^j}{K-1}\right]
\times [0,1]\right)\right)
+ \mathbb{P}(X \in \{0\} \times [0,1]) \\
&= \sum_{j=0}^{\log_2(K-1)} \mathbb{P}(X \in [0,1]\times[0,1])
= \log_2(K-1) + 1 = \mathcal{O}(\log_2 K),
\end{align*}
where the second equality holds because, at each fixed level $j$, the
intervals $\left(\tfrac{i  2^j}{K-1}, \tfrac{(i+1)2^j}{K-1}\right]$
partition $(0,1]$ exactly.

Substituting this bound into the per-interval sample count and summing over the $\mathcal{O}(\log K)$ calls to \textsf{BinS} at each level, the total
number of samples used by all calls to \textsf{BinS} is
\[
\sum_{I \in \mathcal{I}^*}\left(\frac{2\mathbb{P}(X \in I \times [0,1])\log_2 K}{\epsilon^3} + \frac{2}{\epsilon^2}\right) = \tilde{\mathcal O}\left(\frac{\log_2 K}{\epsilon^3}
+ \frac{K}{\epsilon^2}\right).
\]

\paragraph{Approximation Guarantees}

It remains to show that, with probability at least $1-\delta$, every pair of
connected vertices $x, y \in \mathcal{G}$ satisfies
\[
\left|\hat D(x,y) - D(x,y)\right|
\le \tilde{\mathcal O}\!\left(\epsilon \log_2(K/\delta)\right).
\]

Assume first that $x$ and $y$ are connected through a point $z \in T(x,y)$
sharing the first coordinate with $x$ and the second coordinate with $y$,
namely $z = (x_1, y_2)$, with $x_1 \le y_1$ and $x_2 \le y_2$. Then, it holds:
\begin{align*}
D(x;y)
&= \mathbb{P}(X \le y) - \mathbb{P}(X \le x) \\
&= \bigl(\mathbb{P}(X \le y) - \mathbb{P}(X \le z)\bigr)
+ \bigl(\mathbb{P}(X \le z) - \mathbb{P}(X \le x)\bigr) \\
&= \mathbb{P}\!\left(X \in (x_1,y_1] \times [0,y_2]\right)
+ \mathbb{P}\!\left(X \in [0,x_1] \times (x_2,y_2]\right).
\end{align*}

Let $H$ and $V$ be the collections of intervals constructed by
\Cref{alg: estimate_construction}. By \Cref{lemma: I star}, their elements
are pairwise disjoint, $|H|, |V| \le 2\log_2 K$, and
\[
\bigcup_{I \in H} I = (x_1,y_1], \qquad \bigcup_{I \in V} I = (x_2,y_2].
\]
Hence
\[
\mathbb{P}\!\left(X \in (x_1,y_1] \times [0,y_2]\right)
= \sum_{I \in H} \mathbb{P}(X \in I \times [0,y_2]),
\]
and similarly
\[
\mathbb{P}\!\left(X \in [0,x_1] \times (x_2,y_2]\right)
= \sum_{I \in V} \mathbb{P}(X \in [0,x_1] \times I).
\]

By \Cref{lemma: 5.2}, and the union bound over all calls of \textsf{BinS}, with probability at least $1-\delta$ each term in
these two sums is estimated with additive error at most
\[
8\log_2(K)\,\epsilon\sqrt{\frac{\ln(8K|\mathcal{I}^\star|/\delta)}{2}}.
\]

Summing over the at most $|H| + |V| \le 4\log_2 K$ terms and applying the
triangle inequality, we obtain
\begin{align*}
|\hat D(x;y)-D(x;y)|
&\le
\sum_{I\in H}
\left|
\widehat{P}(X\in I\times[0,y_2])
-
\mathbb P(X\in I\times[0,y_2])
\right|\\
&\quad+
\sum_{I\in V}
\left|
\widehat{ P}(X\in[0,x_1]\times I)
-
\mathbb P(X\in[0,x_1]\times I)
\right|\\
&\le
\widetilde{\mathcal O}\!\left(
\epsilon\log_2(K/\delta)
\right),
\end{align*}
where the polylogarithmic factors absorb the cardinalities of $H$ and $V$.

The remaining configurations of $x$, $y$, and $z$ are handled analogously,
by exchanging the roles of the endpoints and, when necessary, of the
horizontal and vertical components.

\end{proof}

\begin{algorithm}[H]\caption{\textsc{Estimate-Construction}}\label{alg: estimate_construction}
    \begin{algorithmic}[1]
    \Require graph $\mathcal{G}$
        \For{$x=(x_1,x_2),y=(y_1,y_2)\in \mathcal{G}$ directly connected}
        \State $z\gets $ any $z'\in T(x,y)$ s.t. $\mathfrak{I}(x,z')\cup \mathfrak{I}(y,z')\in \mathcal{G}$
        \State $H\gets$ \textsf{Compute-Interval} $((x_1,y_1])$
        \State $V\gets$ \textsf{Compute-Interval}$((x_2,y_2])$
        \If{$z=(x_1,y_2)$}
        \State $\hat{D}(x;z)\gets (2\mathbb{I}(x_2<y_2)-1)\sum_{I\in V}$\textsf{Compute-Prob}$^V(I, x_1)$
        \State $\hat{D}(z;y)\gets (2\mathbb{I}(x_1<y_1)-1)\sum_{I\in H}$\textsf{Compute-Prob}$^H(I, y_2)$
        \ElsIf{$z=(y_1,x_2)$}
        \State $\hat{D}(x;z)\gets (2\mathbb{I}(x_1<y_1)-1)\sum_{I\in H}$\textsf{Compute-Prob}$^H(I, x_2)$
        \State $\hat{D}(z;y)\gets (2\mathbb{I}(x_2<y_2)-1)\sum_{I\in V}$\textsf{Compute-Prob}$^V(I, y_2)$
        \EndIf
        \State $\hat{D}(x;y)\gets \hat{D}(x;z)+ \hat{D}(z;y)$
        \EndFor
    \end{algorithmic}
\end{algorithm}

\section{Share the Feedback}

Here, we address the last challenge in designing our algorithm: learning how
to share feedback among points of the grid.

This resembles a bandit problem with a feedback graph. From the feedback generated by choosing an
arm $x$, it is possible to recover an \emph{approximation} of the reward of a
connected arm $y$, for any edge $(x,y)$ in the graph. We consider graphs equipped with all self-loops, namely the edge $(x,x)$ belongs to the graph for all vertexes $x$.
Unfortunately, this does not directly fit the standard bandits-with-feedback-graphs
framework, since the reward of $y$ is only recovered up to
approximation.

The section is organized as follows. We first define a general model, which
we call \emph{multi-armed bandits with feedback graph and
$\epsilon$-misspecified rewards}, and provide a UCB1-style algorithm showing
how the $\epsilon$-error propagates into the regret bound. We then show how to
adapt this model to our setting, where the feedback graph is the key device
that replaces the dependence of the regret bound on the number of arms with a
dependence on the graph's independence number.

\subsection{Multi-Armed Bandits With  Feedback Graph and $\epsilon$-Misspecified Rewards}

At each round $t \in [T]$, the learner selects an arm $i$ from a set
$\mathcal{M}$ of size $M$, and observes feedback $o_{t,i}\in [0,1]$ and gets reward $r_{t,i} \in [0,1]$. For
each arm, the rewards are i.i.d.\ across rounds according to a fixed,
unknown distribution $D_i$ with mean $\mu_i$.

Unlike the standard bandit problem, the learner has access to additional
feedback according to a graph. In the standard bandits-with-feedback-graph
model, pulling an arm reveals not only the reward of that arm, but also the
rewards of a prescribed set of neighboring arms, as determined by a feedback
graph. Here, we study a generalization in which these neighboring rewards may
be slightly misspecified.

Let $\mathcal{R} = (\mathcal{M}, \mathcal{E})$ be an undirected graph over the
set of arms $\mathcal{M}$, and let $\mathcal{N}(i) = \{j \in \mathcal{M} :
(i,j) \in \mathcal{E}\}$ denote the neighborhood of arm $i$. For every edge
$(i,j) \in \mathcal{E}$, there exists an \emph{unknown} function
$f_{i,j} : [0,1] \to [0,1]$,  such that
\begin{align*}
    \mu_j= \mathbb{E}[f_{i,j}(o_{t,i})]
\end{align*}
The learner does not know $f_{i,j}$, but is given an estimate $\hat{f}_{i,j} : [0,1] \to [0,1]$ satisfying the uniform approximation guarantee
\begin{equation}
    \sup_{x \in [0,1]} |\hat{f}_{i,j}(x) - f_{i,j}(x)| \le \epsilon.
\end{equation}

In summary, at each round the learner receives feedback
only for the selected arm; nevertheless, by combining the graph structure with
the $\epsilon$-accurate estimates $\{\hat{f}_{i,j}\}$, it can form
approximations of the expected rewards of neighboring arms.

For the remainder of this section, we let $\mathbb{I}_t(i)$ denote the
indicator function equal to $1$ if and only if the learner selects arm $i \in
\mathcal{M}$ at round $t$, and we let $n_t(i) = \sum_{\tau=1}^t \mathbb{I}_\tau(i)$.

\subsection{A UCB-Like Approach}

Our analysis is largely inspired by standard tools used in bandits with
feedback graphs. The crucial observation is that, while playing arm $i$,
observing $o_{t,i}$ together with knowing the function $\hat{f}_{i,j}(\cdot)$ suffices to recover a
slightly \emph{biased} estimator of the reward of any arm $j \in
\mathcal{N}(i)$, with bias bounded as
\[
|\mu_j - \mathbb{E}[f_{i,j}(o_{t,i})]| \le \epsilon.
\]

We then follow standard arguments, carrying this bias term through the
analysis, where it appears linearly in the final regret bound.
\Cref{alg: graph} gives the pseudocode, while the following theorem, whose proof is deferred to Appendix \ref{app:graph feedback}, states the
resulting regret guarantee.

\begin{algorithm}[h]\caption{UCB-like for Multi-armed bandits with $\epsilon$-precise structure}\label{alg: graph}
    \begin{algorithmic}[1]
        \State Input: $\mathcal{R}=(\mathcal{M},\mathcal{E})$, $T$, $\delta$, $\epsilon$
        \State  $n_0(i)\gets 0\quad \forall i\in \mathcal{M}$
        \State
        $\overline{r}_{0,i}\gets 1\quad \forall i\in \mathcal{M}$
        \For{$t=1,\ldots,T$}
        \State $i_t\gets \arg \max_{i\in \mathcal{M}} \overline{r}_{t-1}(i)$
        \State Choose $i_t$ and observe $o_{t,i_t}$
        \State $n_t(i_t)\gets n_{t-1}(i_t)+1,\; n_t(i)\gets n_{t-1}(i) \;\forall i\in \mathcal{M}\backslash\{i_t\}$
        \State $\hat{r}_{t,i}\gets \frac{1}{\sum_{j\in \mathcal{N}(i)} n_t(j) }\sum_{j\in \mathcal{N}(i)}\sum_{\tau=1}^t \mathbb{I}_\tau(j)\hat{f}_{j,i}(o_{\tau,j}), \quad \forall i\in \mathcal{M}$ 
        \State $\overline{r}_{t,i} \gets \hat{r}_{t,i}+6\sqrt{\frac{\ln(2TM\delta^{-1})}{\sum_{j\in \mathcal{N}(i)}n_t(j)}}+\epsilon, \quad \forall i\in \mathcal{M}$
        \EndFor
    \end{algorithmic}
\end{algorithm}

\begin{restatable}{theorem}{theoremGraph} \label{thm: graph reg}
    With probability at least $1-\delta$ \Cref{alg: graph} attains 
    \begin{align*}
        R_T= \tilde{\mathcal{O}}\left(\log(M\delta^{-1})\left(\sqrt{\alpha(\mathcal{R})T}+ T\epsilon\right)\right),
    \end{align*}
    where $\alpha(\mathcal{R})$ is the independence number of the graph $\mathcal{R}$.
\end{restatable}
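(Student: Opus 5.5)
I will follow the standard analysis of UCB with side observations (in the style of Alon et al.\ / Caron et al.), carrying the $\epsilon$-bias through each step. Write $N_t(i)\coloneqq\sum_{j\in\mathcal{N}(i)} n_t(j)$ for the number of (biased) observations available for arm $i$ after $t$ rounds; self-loops guarantee $N_t(i)\ge n_t(i)$. Let $i^\star$ denote an optimal arm and $\Delta_i\coloneqq\mu_{i^\star}-\mu_i$.

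\textbf{Step 1: Concentration with bias.} For each $i$, the estimator $\hat r_{t,i}$ averages terms $\hat f_{j,i}(o_{\tau,j})$. Write each term as $f_{j,i}(o_{\tau,j})+\bigl(\hat f_{j,i}-f_{j,i}\bigr)(o_{\tau,j})$. The second part lies in $[-\epsilon,\epsilon]$. The first part has conditional mean $\mu_i$ given the past, since $o_{\tau,j}$ is a fresh draw independent of the decision to play $j$. Hence the centered sums form a martingale with bounded increments. Applying Azuma--Hoeffding for each fixed value of the count $N\in[T]$, then a union bound over $N\le T$, arms $i\in\mathcal{M}$ and rounds, gives the following with probability at least $1-\delta$, for all $t,i$:
\[
|\hat r_{t,i}-\mu_i|\le 6\sqrt{\ln(2TM\delta^{-1})/N_t(i)}+\epsilon .
\]
This is the clean event. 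On it, $\overline r_{t,i}\ge\mu_i$ (optimism), and $\overline r_{t,i}\le \mu_i+12\sqrt{\ln(2TM/\delta)/N_t(i)}+2\epsilon$.

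\textbf{Step 2: Per-round regret.} Since $i_t$ maximizes $\overline r_{t-1}$, we get $\mu_{i^\star}\le\overline r_{t-1,i^\star}\le\overline r_{t-1,i_t}$. Therefore
\[
\Delta_{i_t}\le 12\sqrt{\ln(2TM/\delta)/N_{t-1}(i_t)}+2\epsilon,
\]
with the trivial bound $\Delta_{i_t}\le 1$ when $N_{t-1}(i_t)=0$. Summing over $t$, the $\epsilon$ terms contribute $2\epsilon T$, and it remains to bound
\[
\sum_t \min\Bigl\{1,\sqrt{1/N_{t-1}(i_t)}\Bigr\}.
\]

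\textbf{Step 3: Graph-theoretic sum (main obstacle).} This step needs care because $N_{t-1}(i_t)$ counts observations of neighbors, not pulls of $i_t$ itself. I will use the standard lemma: for an undirected graph with self-loops and any sequence of played vertices, $\sum_t 1/\max\{1,N_{t-1}(i_t)\}\le \mathcal{O}(\alpha(\mathcal{R})\log(MT))$.

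The first tool I would try is the greedy argument of Mannor--Shamir / Alon et al. It proceeds by repeatedly selecting the vertex $v$ with the smallest weight $1/N$, removing $v$ together with its neighborhood, and charging the removed terms to $v$. Every time $v$ is played, all of its neighbors' counters increase, so the charged sum is dominated by a harmonic sum. The selected vertices form an independent set, which yields $\alpha\cdot\mathcal{O}(\log T)$. If the direct greedy bound is awkward to write out, I would instead partition the rounds by the dyadic scale of $N_{t-1}(i_t)$ and invoke Turán's theorem at each scale.

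With this lemma in hand, Cauchy--Schwarz gives
\[
\sum_t N_{t-1}(i_t)^{-1/2}\le\sqrt{T\sum_t N_{t-1}(i_t)^{-1}}=\tilde{\mathcal O}\bigl(\sqrt{\alpha T}\bigr).
\]
Multiplying by the confidence factor $\sqrt{\ln(2TM/\delta)}$, absorbed into $\log(M\delta^{-1})$ up to logarithms in $T$, and adding the $2\epsilon T$ term yields the claimed bound. Converting to expected regret costs an extra $\delta T$, which is handled by choosing $\delta=1/T$ where needed.
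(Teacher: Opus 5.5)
Your proposal is correct and follows the same route as the paper. You carry the $\epsilon$-bias through the concentration step to get optimism, derive the per-round bound $\mu_{i^\star}-\mu_{i_t}\le 12\sqrt{\ln(2TM/\delta)/N_{t-1}(i_t)}+2\epsilon$, and finish with the UCB-N summation over the feedback graph. The paper does not prove that last step and only defers to the analysis of Lykouris et al.; you make it explicit. Your Step 1 (a martingale over observations, the bias bounded pointwise, and a union bound over count values) is also more careful than the paper's lemma, which conditions on all selection indicators and then applies Azuma--Hoeffding to $\hat r_{t,i}-\mathbb{E}[\hat r_{t,i}]$.

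One caveat concerns Step 3. The weight-based greedy does not carry over from the static lemma $\sum_i p_i/\sum_{j\in\mathcal{N}(i)}p_j\le\alpha$ to sequential counts. Charging the rounds of a neighbour $j$ to the selected vertex $v$ would need $N_{t-1}(j)$ to dominate $N_{t-1}(v)$ at those times, and that can fail. For example, in a star whose $m$ leaves are each played once, the center is selected first and absorbs charge $m$, not a harmonic sum. So write out your dyadic fallback instead:
\begin{itemize}
\item Take the rounds in $\{t: 2^k\le N_{t-1}(i_t)<2^{k+1}\}$, and join two of them when their arms are equal or adjacent.
\item In this graph every round has fewer than $2^{k+1}$ earlier neighbours, so its average degree is below $2^{k+2}$.
\item Because $\mathcal{R}$ is undirected with self-loops, any independent set of rounds gives an independent set of distinct arms.
\item Tur\'an's theorem therefore bounds the number of such rounds by $\alpha(\mathcal{R})(2^{k+2}+1)$. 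Alternatively, greedily pick the arm played latest at that scale and delete its neighbourhood.
\end{itemize}
There are also at most $\alpha(\mathcal{R})$ rounds with $N_{t-1}(i_t)=0$. Summing over the $O(\log T)$ scales gives $\sum_t 1/\max\{1,N_{t-1}(i_t)\}=O(\alpha(\mathcal{R})\log T)$, and your Cauchy--Schwarz step completes the proof.
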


As a direct corollary, we obtain an analogous guarantee for our setting.

\begin{corollary}\label{cor:graph}
    There exists an algorithm that takes in input the graph $\mathcal{R}^\OPT$ (\Cref{def: R opt}) and the estimates $D(x;y)$ for each $x$ and $y$ which are connected by an edge in the graph $\mathcal{R}^{\OPT}$, and under the clean event $E_0$ and the event described by \Cref{lemma: dif} it guarantees regret over $T$ rounds of at most
    \[\widetilde{\mathcal{O}}\left(\log(\delta^{-1})\left(\sqrt{\alpha(\mathcal{R}^\OPT)T}+T\epsilon\right)\right),\]
    with probability at least $1-\delta$.
\end{corollary}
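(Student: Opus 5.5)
The plan is to reduce Corollary~\ref{cor:graph} to Theorem~\ref{thm: graph reg} by building an instance of bandits with feedback graph and $\epsilon$-misspecified rewards on $\mathcal{R}^\OPT$, and then transferring the regret bound back to the original rewards.

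\textbf{Arms and feedback.} The arm set is $\mathcal{M}=\mathcal{G}^\OPT$, so $M\le K^2$ and $\log M=\mathcal{O}(\log K)$. The graph is $\mathcal{R}^\OPT$ with all self-loops added. Pulling arm $x$ returns the bit $o_{t,x}=\mathbb{I}(X_t\le x)$, whose mean is $F(x)$.

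\textbf{Reward transformation.} Theorem~\ref{thm: graph reg} is stated for quantities in $[0,1]$, so I would use a transformed reward. Since $g$ is known, I work with $\mu_x=(g(x)F(x)+1)/2\in[0,1]$. This is an affine, order-preserving transformation, so regret in the original problem is at most twice the regret in the transformed one.

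\textbf{Edge functions.} For an edge $(x,y)$, the identity $F(y)=F(x)+D(x;y)$ suggests
\[
f_{x,y}(o)=\tfrac12\bigl(g(y)(o+D(x;y))+1\bigr),\qquad \hat f_{x,y}(o)=\tfrac12\bigl(g(y)(o+\hat D(x;y))+1\bigr).
\]
Then $\mathbb{E}[f_{x,y}(o_{t,x})]=\mu_y$. Under the event of Lemma~\ref{lemma: dif}, using $|g|\le1$,
\[
\sup_o|\hat f_{x,y}(o)-f_{x,y}(o)|\le|\hat D(x;y)-D(x;y)|/2=\tilde{\mathcal{O}}(\epsilon).
\]
For the self-loop, take $D(x;x)=\hat D(x;x)=0$, which gives an exact edge.

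\textbf{Main obstacle: the $[0,1]$ range.} The values $\hat f$ and $f$ need not lie in $[0,1]$, because $o+D(x;y)$ with $o\in\{0,1\}$ can exceed $1$ or fall below $0$. I expect the boundedness in the UCB analysis to be the main technical point. My first attempt is to observe that the confidence-width analysis only needs bounded range. For $o\in\{0,1\}$ and $|D|\le 1$, the values lie in $[-1/2,3/2]$, so Hoeffding applies with a constant-factor change. Alternatively, one could rescale by dividing by $4$ and shifting.

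\textbf{Applying the theorem.} I then run Algorithm~\ref{alg: graph} with misspecification $\epsilon'=\tilde{\mathcal{O}}(\epsilon\log(K/\delta))$. Theorem~\ref{thm: graph reg} gives regret
\[
\tilde{\mathcal{O}}\bigl(\log(M/\delta)(\sqrt{\alpha(\mathcal{R}^\OPT)T}+T\epsilon')\bigr)
\]
against the best arm in $\mathcal{G}^\OPT$. The $\log K$ factors are absorbed into $\tilde{\mathcal{O}}$.

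\textbf{Comparator.} Under $E_0$, Lemma~\ref{lemma: subopt gap} states that $\max_{x\in\mathcal{G}^\OPT}g(x)F(x)=\OPT$. So the best arm in $\mathcal{G}^\OPT$ is also optimal over the whole grid $\mathcal{G}^U(1/\epsilon)$.

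\textbf{Expected reward per round.} The realized reward is $g(x)\mathbb{I}(X_t\le x)$, whose expectation is $g(x)F(x)$. Expected pseudo-regret therefore matches what the theorem controls. Combining these steps yields the stated bound with probability at least $1-\delta$.
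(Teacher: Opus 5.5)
Your proposal is correct and follows essentially the same reduction as the paper. The arms are $\mathcal{G}^\OPT$ with observation $\mathbb{I}(X_t\le x)$, the edge maps are the affine $f_{x,y}(o)=(o+D(x;y))g(y)$ with $\hat D$ in place of $D$, the misspecification is bounded by $|\hat D-D|$, and you conclude by invoking \Cref{thm: graph reg}; you are more explicit than the paper about the normalization to $[0,1]$, the bounded range needed for concentration, and the use of \Cref{lemma: subopt gap} to identify the comparator under $E_0$, all of which the paper leaves implicit or mentions only in passing.
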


\begin{proof}
The proof proceeds by reducing the problem to an instance of a multi-armed
bandit with graph feedback and $\epsilon$-misspecified rewards, as defined
above.

Fix two connected points $x, y \in \mathcal{G}^{\mathrm{OPT}}$. Recall that
$\mu_{x} = g(x)F(x)$, $\mu_{y} = g(y)F(y)$, and $F(y) = F(x) + D(x;y)$. Notice that we can recover an instance with positive rewards by a simple normalization. 
Additionally, by choosing $x$ at round $t$ we observe $o_{t,x}=\mathbb{I}(X_t\le x)$.
Define
$
f_{x,y}$ as
\[
f_{x,y}(o) = \left(o + D(x;y)\right) g(y).
\]
Since $f_{x,y}$ is affine, it holds
\begin{align*}
\mathbb{E}[f_{x,y}(o_{t,x})] &=f_{x,y}(\mathbb{E}[\mathbb{I}(X_t\le x)]) \\
&= f_{x,y}(F(x))\\
&= \left(F(x) + D(x;y)\right) g(y) \\
&
= F(y)\, g(y) = \mu_y,
\end{align*}
so that $f_{x,y}$ satisfies the exact-reward-recovery condition required by
the reduction. Notice that if $x=y$ then $D(x;y)=0$ and $f_{x,x}(o)=o \cdot g(x)$.

Similarly, define the estimated map
\[
\hat{f}_{x,y}(o) = \left(o + \hat{D}(x,y)\right) g(y).
\]
Then, we get that for each $o\in [0,1]$, it holds $|\hat{f}_{x,y}(o)|\le \mathcal{O}(1)$.

Moreover, assuming the event of \Cref{lemma: dif},
\begin{align*}
|f_{x,y}(o) - \hat{f}_{x,y}(o)|
&= \bigl|(D(x;y) - \hat{D}(x;y))\, g(y)\bigr|
\\&\le |D(x;y) - \hat{D}(x;y)|
\le \tilde{\mathcal{O}}(\epsilon \log_2(\delta^{-1})) \qquad \forall o \in [0,1],
\end{align*}
where we used $K=1/\epsilon$. Finally, for this choice of the set $\mathcal{M}$ we have that $M=|\mathcal{G}^\OPT|\le K^2$.
This concludes the proof.
\end{proof}

\section{Putting Everything Together}

We are ready to prove the main theorem, which we restate here for clarity.

\thmmain*

\begin{proof}
Let $x^*$ be defined as the optimum on the grid $\mathcal{G}^U(1/\epsilon)$, i.e. $x^*\in \arg \max_{x\in \mathcal{G}^U(1/\epsilon)}\sum_{t=1}^Tg(x)F(x)$, and let us consider \Cref{alg: main non-bounded}. Let $P_1,P_2,P_3$ be the set or rounds in which the algorithm is playing Phase $1 ,2, 3$ respectively (the \textsf{BuildGraph} procedure is only a computational procedure that does not require interaction with the environment and therefore does not generates regret).
    To prove the theorem it is necessary to decompose the regret in 4 components: 
    \begin{itemize}
        \item The approximation error : $\max_{x\in [0,1]^2}\sum_{t=1}^T g(x)F(x)-\max_{x\in \mathcal{G}^U(1/\epsilon)}\sum_{t=1}^T g(x)F(x)$
        \item The regret of the first phase: $R_T^1= \sum_{t\in P_1}g(x^*)F(x^*)-\sum_{t\in P_1}g(x_t)F(x_t)$
        \item The regret of the second phase: $R_T^2= \sum_{t\in P_2}g(x^*)F(x^*)-\sum_{t\in P_2}g(x_t)F(x_t)$
        \item The regret of the third phase: $R_T^3= \sum_{t\in P_3}g(x^*)F(x^*)-\sum_{t\in P_3}g(x_t)F(x_t)$
    \end{itemize}

    \paragraph{Bounding $R_T^1$.}
    The regret suffered in the first phase can be upperbounded by the number of rounds in $P_1$. 
    Hence, by \Cref{theo: mainOLD}
    \[R_T^1\le |P_1|\le \mathcal{O}\left(\frac{1}{\Delta^3}\log(1/\epsilon\delta)^{\mathcal{O}(1)}\right).\]

    \paragraph{Bounding $R_T^2$.}
    The regret suffered in the second phase can be upperbounded by the number of rounds in $P_2$ multiplied for the highest possible sub-optimality gap in $\mathcal{G}$, as this phase is specifically designed to choose only elements in $\mathcal{G}$.
    
    We can bound the highest sub-optimality gap in $\mathcal{G}$ under the event $E_0$ with $8\Delta$ thanks to \Cref{lemma: subopt gap}, and we can bound $|P_2|$ using the result in terms of sample complexity of \Cref{lemma: dif} 
    
    In particular, recall that the parameter $K$, which denotes the size of the one-dimensional discretization, is set to $K=1/\epsilon$ in \Cref{alg: main non-bounded}; that is, the initial uniform grid is $\mathcal{G}^U(1/\epsilon)$. 
    
    Hence, under the event $E_0$, 
    \[R_T^2\le 8\Delta|P_2|\le \mathcal{O}\left(\left(\frac{\Delta}{\epsilon^3}\right)\log(1/\epsilon)\right).\]

    \paragraph{Bounding $R_T^3$.}
    The regret suffered by the third phase can be bounded using \Cref{cor:graph} and \Cref{lemma: E0}, so that under $E_0$ and with probability at least $1-\delta$, 
    \[R_T^3\le \widetilde{\mathcal{O}}\left(\frac{1}{\Delta}\sqrt{T\ln(1/\epsilon\delta)}+T\epsilon\right).\]

    \paragraph{Bounding the Discretizations Error.}
The difference between the optimum on the uniform grid and the optimum on the continuum can be bounded by the 1-Lipschitzness of $g(\cdot)$. Indeed, for any $x\in [0,1]^2$, it exists a $y\in \mathcal{G}$, $y\ge x$ such that $\lVert y-x\rVert_1\le 2\epsilon$, so that $g(y)F(y)\ge g(y)F(x)\ge g(x)F(x) - 2\epsilon$. Hence,
\[\max_{x\in [0,1]^2}\sum_{t=1}^T g(x)F(x)-\max_{x\in \mathcal{G}^U(1/\epsilon)}\sum_{t=1}^T g(x)F(x)\le 2T\epsilon.\]

To conclude, by setting $\Delta= T^{-2/10}$, $\epsilon= T^{-3/10}$ and $\delta=\frac{1}{T}$
    \begin{align*}
        R_T&= \tilde{\mathcal{O}}\left(\frac{1}{\Delta^3}\log(1/\epsilon\delta)^{\mathcal{O}(1)}+\left(\frac{\Delta }{\epsilon^3}\right)\log(1/\epsilon)+\frac{1}{\Delta}\sqrt{T\ln(1/\epsilon\delta)}+T\epsilon+T\delta\right) \le \tilde{\mathcal{O}}\left(T^{7/10}\right),
    \end{align*}
    which concludes the proof.
\end{proof}

\newpage
\printbibliography

\newpage

\appendix

\section{Omitted Proofs from \Cref{sec:relative}}
For completeness, we report the pseudocode of \textsc{Estimate} (called \textsc{Monte Carlo Estimation} in \citep{castiglioni2026sample}).

\begin{algorithm}[!h]
\caption{\textsc{Monte Carlo Estimation ({Estimate})}}\label{alg: Estimate}
\begin{algorithmic}[1]
    \Require Interval $(a,b]$; threshold $w \in [0,1]$; accuracy $\epsilon > 0$
    \Ensure The value of $\widehat P((a,b]\times[0,w])$ is an estimate of $\mathbb{P}( (a,b] \times [0,w])$
    %
    \Statex \hrulefill
    \Function{Estimate}{$(a,b]\times[0,w],\epsilon$}
    \State $N \gets \lceil 1/\epsilon^2 \rceil$ 
    \For{$\tau = 1, \ldots, N$}
        \State Sample $y_\tau $ uniformly at random from $\{0,1\}$\label{line:sample}
        \State Let $x_\tau \in [0,1]^2: x_{\tau} = \left\{ \begin{array}{ll}
             (a,w), & \text{with probability  } 1/2 \\
             (b,w) & \text{with probability  } 1/2 
        \end{array}\right.$ 
        \State Query $x_\tau$ and observe feedback $ \mathbb{I}[X_{\tau}\le x_{\tau}]$
        \State Compute $z_\tau$ by using the inclusion–exclusion principle: $z_\tau \gets 2 \cdot (-1)^{\mathbb{I}(x_\tau=(a,w))} \cdot \mathbb{I}[X_{\tau}\le x_{\tau}]$ 
    \EndFor
    \State $\widehat P((a,b]\times [0,w])\gets \frac{1}{N}\sum_{\tau \in [N]} z_\tau$
    \EndFunction
\end{algorithmic}
\end{algorithm}

Moreover, we restate their result:
\lemmaCastiglioni*

Then, we are ready to proof \Cref{thm:binS}.

\lemmaBINS*

\begin{proof}

The proof is substantially analogous to the one provided for a complete grid $\mathcal{G}^U(K)$.

The procedure \textsc{Estimate} is called deterministically at most once for every element in $\mathcal{W}$, and $|\mathcal{W}|\le K $ by construction.
Hence, we define the clean event \( \mathcal{E} \) under which every call to \textsc{Estimate} made by \textsc{BinS} satisfies the property in \Cref{lem:Estimate} for a confidence parameter \( \delta' \coloneq \delta / (K) \). Given that $\mathcal{E}$ is simply the intersection of at most $K$ events, by a union bound \( \mathcal{E} \) holds with probability at least $1 -\delta$.

Under the clean event $\mathcal{E}$ the following holds:
\[\left\lvert \widehat{P}((a,b]\times [0,w])-\mathbb{P}_{X\sim \cD}(X\in (a,b]\times[0,w])\right\rvert\le \epsilon\,2\;\sqrt{\frac{\ln(\nicefrac{4K}{\delta})}{2}} \quad \forall w\in \textnormal{\textsc{Extremes}}(\mathcal{I}),\]
which is exactly the first property in the statement of the lemma.


Then, for all $I \in \mathcal{I}$ such that $I = (w_i,w_{i+1}]$, under the event $\mathcal{E}$ it holds:

\begin{align*}
    \widehat{P}([a,b] \times [0,w_{i+1}]) - \widehat{P}&([a,b]\times [0,w_{i}]) - \epsilon\,2\left(2\sqrt{\frac{\ln(\nicefrac{4K}{\delta})}{2}}\right) \\
    & \ge \mathbb{P}_{X\sim \cD}\left(X \in [a,b] \times (w_i,w_{i+1}]\right)- \epsilon\,4\left(2\sqrt{\frac{\ln(\nicefrac{4K}{\delta})}{2}}\right)\label{eq: phat difetto},
\end{align*}
and 
\begin{align*}
    \widehat{P}([a,b]\times [0,w_{i+1}]) - \widehat{P}([a,b]\times [0,w_{i}]) - \epsilon\,2\left(2\sqrt{\frac{\ln(\nicefrac{4K}{\delta})}{2}}\right)\le \mathbb{P}_{X\sim \cD}(X \in [a,b] \times (w_i,w_{i+1}])\label{eq: phat eccesso}.
\end{align*}
This proves the second property in the statement of the lemma. Indeed
\begin{align*}
    \mathbb{P}_{X\sim \cD}(X \in [a,b] \times (w_i,w_{i+1}])\le \epsilon + \epsilon\,8\sqrt{\frac{\ln(\nicefrac{4K}{\delta})}{2}}.
\end{align*}

Finally, the splitting condition on the probability is identical to the one of \cite{castiglioni2026sample}, and with analogous analysis, letting $n_k \in \mathbb{N}$ be the number of sub-intervals $(y_i,y_j]$ such that $\lfloor j-i \rfloor=2^{-k}$, under $\mathcal{E}$, we have for each splitted $(y_i,y_j]$:
\[
\mathbb{P}_{X\sim \cD}(X \in [a,b] \times (y_i, y_j]) \ge  \widehat{P}([a,b] \times [0, y_j]) - \widehat{P}([a,b] \times [0, y_i]) -\epsilon\,2\left(2\sqrt{\frac{\ln(4K/\delta)}{2}}\right) \ge \epsilon.
\]
Hence, 
\[
n_k \, \epsilon \le \mathbb{P}_{X\sim \cD}(X \in [a,b]\times [0,1]).
\]
To conclude, each split introduces one new interval in $\mathcal{I}$, and each call to \textsf{Estimate} employs $1/\epsilon^2$ samples, therefore
\[
|\mathcal{I}| 
   \le \sum_{k=0}^{\log_2 K} n_k +2
   \le (\log_2 K+1)\,\frac{\mathbb{P}_{X\sim \cD_{j}}(X \in [a,b]\times [0,1])}{\epsilon} +2,
\]
where the plus two accounts for the case $\mathbb{P}_{X\sim \cD_{j}}(X \in [a,b]\times [0,1])=0$.
\end{proof}

\section{Proof of \Cref{thm: graph reg}} \label{app:graph feedback}

We start by giving some preliminary results. 

\begin{lemma}
    Set $t\in [T], i\in \mathcal{M}$. 
    Then, with probability at least $1-\delta$
    \begin{equation*}
      \big|\hat{r}_{t,i}-\mathbb{E}[r_{t,i}]\big|\le 6\sqrt{\frac{\ln(\delta^{-1})}{\sum_{j\in \mathcal{N}(i)}n_t(j)}}+ \epsilon  
    \end{equation*}
\end{lemma}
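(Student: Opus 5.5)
The estimator $\hat r_{t,i}$ is an average of the terms $\hat f_{j,i}(o_{\tau,j})$ over all rounds $\tau\le t$ in which some neighbour $j\in\mathcal N(i)$ was pulled. The plan is to compare it with the corresponding average of the exact terms $f_{j,i}(o_{\tau,j})$, and then control that exact average with a martingale concentration bound. Write $N_t(i)=\sum_{j\in\mathcal N(i)}n_t(j)$ and
\[
\tilde r_{t,i}=\frac{1}{N_t(i)}\sum_{j\in\mathcal N(i)}\sum_{\tau=1}^t\mathbb I_\tau(j)\,f_{j,i}(o_{\tau,j}).
\]

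\textbf{Bias.} Since $\sup_x|\hat f_{j,i}(x)-f_{j,i}(x)|\le\epsilon$ holds for every edge, and $\hat r_{t,i}-\tilde r_{t,i}$ is a convex combination of such differences, we get deterministically $|\hat r_{t,i}-\tilde r_{t,i}|\le\epsilon$. This gives the additive $\epsilon$ in the statement.

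\textbf{Concentration.} Let $\mathcal F_{\tau-1}$ be the history before round $\tau$. The arm $i_\tau$ is $\mathcal F_{\tau-1}$-measurable, and $o_{\tau,i_\tau}$ is a fresh draw. Hence
\[
\xi_\tau=\mathbb I(i_\tau\in\mathcal N(i))\bigl(f_{i_\tau,i}(o_{\tau,i_\tau})-\mu_i\bigr)
\]
is a martingale difference sequence bounded by a constant, because $\mathbb E[f_{j,i}(o_{\tau,j})\mid\mathcal F_{\tau-1}]=\mu_i$ for every neighbour $j$. Moreover $\tilde r_{t,i}-\mu_i=\frac{1}{N_t(i)}\sum_{\tau\le t}\xi_\tau$.

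The only subtlety is that the normalisation $N_t(i)$ is random, so Azuma cannot be applied directly. I would handle it in one of two standard ways:
\begin{itemize}
\item Re-index by the $s$-th time a neighbour of $i$ is pulled. The resulting sequence of centred terms is again a martingale difference sequence, so Azuma–Hoeffding can be applied for each fixed $s\in[T]$ with confidence $\delta/T$, followed by a union bound over $s$.
\item Alternatively, use Freedman's inequality with a union bound over the dyadic values of $N_t(i)$.
\end{itemize}
Either route gives $|\tilde r_{t,i}-\mu_i|\le c\sqrt{\ln(T/\delta)/N_t(i)}$. The constant $6$ and the $\ln\delta^{-1}$ in the statement absorb the union-bound factor; in the algorithm this factor is made explicit by the $\ln(2TM\delta^{-1})$ term.

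\textbf{Combining.} The triangle inequality, together with $\mu_i=\mathbb E[r_{t,i}]$, gives the claimed bound. The main obstacle is the concentration step: the sample size $N_t(i)$ is a stopping-time-dependent count and the summands come from different arms. Both issues are resolved by the re-indexing martingale argument, since every neighbour's term has the same conditional mean $\mu_i$.
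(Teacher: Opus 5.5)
Your argument is correct, but it is organized differently from the paper's.

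The paper works with expectations. It splits $\hat r_{t,i}$ into the exact-$f$ average and the $(\hat f-f)$ average. It evaluates the first by conditioning on the entire selection sequence $\{\mathbb{I}_\tau(j)\}_{\tau,j}$, obtaining $\mathbb{E}[r_{t,i}]$, and bounds the second by $\epsilon$ in expectation. This gives $|\mathbb{E}[\hat r_{t,i}]-\mathbb{E}[r_{t,i}]|\le\epsilon$. It then applies Azuma--Hoeffding (using $-1\le \hat f_{j,i}\le 2$) to $|\hat r_{t,i}-\mathbb{E}[\hat r_{t,i}]|$, as though $\sum_{j\in\mathcal N(i)}n_t(j)$ were a fixed sample size.

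You instead remove the misspecification pathwise, $|\hat r_{t,i}-\tilde r_{t,i}|\le\epsilon$. You then concentrate only the exact-$f$ average around $\mu_i$, whose centred summands have conditional mean zero whichever neighbour is pulled. This is the more careful route, for two reasons. First, conditioning on the whole adaptively generated selection sequence does not in general leave the law of the observations unchanged. Second, the $\hat f_{j,i}$ terms have neighbour-dependent conditional means and a random normaliser, so $\hat r_{t,i}-\mathbb{E}[\hat r_{t,i}]$ is not directly an Azuma-type quantity. Your re-indexing by the $s$-th neighbour pull (or Freedman with peeling) is exactly what handles the random count.

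The one thing you should not hand-wave is the logarithm. What you actually prove is the bound with $\ln(T/\delta)$, coming from the union over $s\in[T]$, in place of $\ln\delta^{-1}$. A $\ln T$ factor cannot be absorbed into the numerical constant $6$. For a general adaptive sampling rule some dependence on $T$ is unavoidable (law of the iterated logarithm), so the paper reaches the literal $\ln\delta^{-1}$ form only through its fixed-sample-size shortcut. State explicitly that you prove this slightly weaker form.

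It costs nothing downstream. Your union over $s$ already makes the bound hold simultaneously for all $t$. A further union bound over $i\in\mathcal M$, at level $\delta/(TM)$ per pair $(i,s)$, then gives the uniform bound of \Cref{lemma: uc est}, with the $\ln(2TM\delta^{-1})$ that the algorithm actually uses. That uniform version is the only one invoked in the regret analysis.
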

\begin{proof}
As a first step, we relate the expected value of the estimator $\hat{r}_{t,i}$ and the expected reward of arm $i$.

\begin{align*}
   \mathbb{E}[\hat{r}_{t,i}]& =  \mathbb{E}\left[\frac{1}{\sum_{j\in \mathcal{N}(i)} n_t(j) }\sum_{j\in \mathcal{N}(i)}\sum_{\tau=1}^t \mathbb{I}_\tau(j)\hat{f}_{j,i}(o_{\tau,j})\right]\\
   & = \mathbb{E}\left[\frac{1}{\sum_{j\in \mathcal{N}(i)} n_t(j) }\sum_{j\in \mathcal{N}(i)}\sum_{\tau=1}^t \mathbb{I}_\tau(j){f}_{j,i}(o_{\tau,j})\right]\\
   & + \mathbb{E}\left[\frac{1}{\sum_{j\in \mathcal{N}(i)} n_t(j) }\sum_{j\in \mathcal{N}(i)}\sum_{\tau=1}^t \mathbb{I}_\tau(j)\bigg(\hat{f}_{j,i}(o_{\tau,j})-f_{j,i}(o_{\tau,j})\bigg)\right]\\
   & =\mathbb{E}\left[\mathbb{E}\left[\frac{1}{\sum_{j\in \mathcal{N}(i)} n_t(j) }\sum_{j\in \mathcal{N}(i)}\sum_{\tau=1}^t \mathbb{I}_\tau(j){f}_{j,i}(o_{\tau,j})\bigg|\{\mathbb{I}_\tau(j)\}_{\tau,j}\right]\right]\\
   & + \mathbb{E}\left[\frac{1}{\sum_{j\in \mathcal{N}(i)} n_t(j) }\sum_{j\in \mathcal{N}(i)}\sum_{\tau=1}^t \mathbb{I}_\tau(j)\bigg(\hat{f}_{j,i}(o_{\tau,j})-f_{j,i}(o_{\tau,j})\bigg)\right]\\
   & = \mathbb{E}\left[\frac{1}{\sum_{j\in \mathcal{N}(i)} n_t(j) }\sum_{j\in \mathcal{N}(i)}\sum_{\tau=1}^t \mathbb{I}_\tau(j)\mathbb{E}[r_{t,i}]\right]\\
   & + \mathbb{E}\left[\frac{1}{\sum_{j\in \mathcal{N}(i)} n_t(j) }\sum_{j\in \mathcal{N}(i)}\sum_{\tau=1}^t \mathbb{I}_\tau(j)\bigg(\hat{f}_{j,i}(o_{\tau,j})-f_{j,i}(o_{\tau,j})\bigg)\right]\\
   & =\mathbb{E}[r_{t,i}]+ \mathbb{E}\left[\frac{1}{\sum_{j\in \mathcal{N}(i)} n_t(j) }\sum_{j\in \mathcal{N}(i)}\sum_{\tau=1}^t \mathbb{I}_\tau(j)\bigg(\hat{f}_{j,i}(o_{\tau,j})-f_{j,i}(o_{\tau,j})\bigg)\right].
\end{align*}

Then, we show how the second term can be bounded with misspecification error $\epsilon$: 
\begin{align*}
    &\left|\mathbb{E}\left[\frac{1}{\sum_{j\in \mathcal{N}(i)} n_t(j) }\sum_{j\in \mathcal{N}(i)
    }\sum_{\tau=1}^t \mathbb{I}_\tau(j)\bigg(\hat{f}_{j,i}(o_{\tau,j})-f_{j,i}(o_{\tau,j})\bigg)\right]\right|\\
    & \hspace{2cm}\le \mathbb{E}\left[\frac{1}{\sum_{j\in \mathcal{N}(i)} n_t(j) }\sum_{j\in \mathcal{N}(i)}\sum_{\tau=1}^t \mathbb{I}_\tau(j)\bigg(\left|\hat{f}_{j,i}(o_{\tau,j})-f_{j,i}(o_{\tau,j})\right|\bigg)\right]\\
    & \hspace{2cm}\le \epsilon \cdot\mathbb{E}\left[\frac{1}{\sum_{j\in \mathcal{N}(i)} n_t(j) }\sum_{j\in \mathcal{N}(i)}\sum_{\tau=1}^t \mathbb{I}_t(j)\right]= \epsilon,
\end{align*}

therefore 
\begin{align*}
    \left|\mathbb{E}[\hat{r}_{t,i}]-\mathbb{E}[r_{t,i}]\right|\le \epsilon.
\end{align*}

Finally, we employ Azuma-Hoeffding inequality to bound the difference between the realization of $\hat{r}_{t,i}$ and its expected value. 
To do so, we observe that $-1\le 0-\epsilon\le \hat{f}_{j,i}(o_{t,j})\le 1+\epsilon\le 2$.
Then, with probability at least $1-\delta$

\begin{align*}
   \left|\hat{r}_{t,i}-\mathbb{E}[\hat{r}_{t,i}]\right|\le 6\sqrt{\frac{\ln(\delta^{-1})}{\sum_{j\in \mathcal{N}(i)}n_t(j)}}, 
\end{align*}
which concludes the proof. 
\end{proof}

Additionally, we can take the union bound, and rescaling $\delta$ to have a result that holds simultaneously for all $t\in [T]$, $i\in \mathcal{M}.$

\begin{lemma}
\label{lemma: uc est}
    Let $\delta\in (0,1). $ With probability at least $1-\delta$
    \begin{align*}
        \left|\hat{r}_{t,i}-\mathbb{E}[r_{t,i}]\right|\le 6\sqrt{\frac{\ln(TM\delta^{-1})}{\sum_{j\in \mathcal{N}(i)}n_t(j)}}+\epsilon \quad \forall i\in \mathcal{M}, \forall t\in [T]
    \end{align*}
\end{lemma}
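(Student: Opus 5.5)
The statement follows from the preceding single-$(t,i)$ lemma by a union bound over all pairs $(t,i)\in[T]\times\mathcal{M}$. First, apply that lemma with confidence parameter $\delta' \coloneq \delta/(TM)$ for each fixed $t\in[T]$ and $i\in\mathcal{M}$. This gives, with probability at least $1-\delta/(TM)$,
\[
\left|\hat{r}_{t,i}-\mathbb{E}[r_{t,i}]\right|\le 6\sqrt{\frac{\ln(TM\delta^{-1})}{\sum_{j\in \mathcal{N}(i)}n_t(j)}}+\epsilon ,
\]
because $\ln(\delta'^{-1})=\ln(TM\delta^{-1})$. Let $A_{t,i}$ denote the event on which this fails, so $\mathbb{P}(A_{t,i})\le \delta/(TM)$. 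Summing over the $TM$ pairs gives $\mathbb{P}\bigl(\bigcup_{t,i}A_{t,i}\bigr)\le\delta$. On the complement of this union, the bound holds simultaneously for every $i\in\mathcal{M}$ and $t\in[T]$, which is the claim.

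The one point that needs care is whether the preceding lemma really applies at each fixed $(t,i)$. The normalizer $\sum_{j\in\mathcal{N}(i)}n_t(j)$ is random and depends on the learner's past choices, so a plain Hoeffding bound with a fixed sample count is not enough. The Azuma--Hoeffding step has to be read as a bound on a martingale with adaptively chosen sample counts. If that reading is questioned, I would handle it as follows. Index the observations relevant to arm $i$ by the $s$-th time some neighbor of $i$ is pulled, for $s=1,\dots,T$. For each fixed $s$, the centered sum of the first $s$ terms $\hat f_{j,i}(o_{\tau,j})-\mathbb{E}[f_{j,i}(o_{\tau,j})\mid\mathcal{F}_{\tau-1}]$ is a martingale whose increments are bounded by a constant, since $\hat f\in[-1,2]$. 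Apply Azuma--Hoeffding at each fixed $s$ with confidence $\delta/(TM)$. Then take the union over $s\in[T]$ in place of the union over $t$; the total number of events is still $TM$, so the logarithmic term stays $\ln(TM\delta^{-1})$.

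The bias term $\epsilon$ passes through unchanged, since the previous proof bounds it deterministically by $\sup|\hat f-f|\le\epsilon$.

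The main obstacle is this adaptivity issue: making sure each fixed-index concentration event is a valid martingale bound before the union bound is taken. The union bound itself is immediate. If constants are tight, the failure probability can be split as $\delta/(2TM)$, which turns the log term into $\ln(2TM\delta^{-1})$. That matches the confidence width used in \Cref{alg: graph}.
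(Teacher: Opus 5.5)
Your proposal is correct and is essentially the paper's argument: apply the single-$(t,i)$ concentration lemma with confidence $\delta/(TM)$ and take a union bound over the $TM$ pairs $(t,i)$. Your extra step, reindexing by the $s$-th pull of a neighbor of $i$ and taking the union over $s\in[T]$, is a sound optional-skipping justification of the Azuma--Hoeffding step with a random normalizer, which the paper glosses over. One fix is needed there: center each increment by $\mathbb{E}[\hat f_{j,i}(o_{\tau,j})\mid\mathcal{F}_{\tau-1}]$ rather than by the conditional mean of $f_{j,i}$, so it is a genuine martingale difference, and charge the remaining drift of at most $\epsilon$ per step to the bias term.
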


Then, we can employ standard UCB-N analysis (\citet{lykouris2020feedback}) to bound the regret. Naturally, the regret will depend on the graph property.  
Therefore, we can prove the following result.

\theoremGraph*

\begin{proof}
Let $i^*\in \arg \max_{i\in \mathcal{M}} \mathbb{E}[r_{t,i}]$. Let $\mathcal{H}_t$ be the history up to round $t$, for all $t\in [T]$.
By \Cref{lemma: uc est}, with probability at least $1-\delta$
\begin{align*}
    \sum_{t=1}^T \left(\mathbb{E}[r_{t,i^*}]-\mathbb{E}[r_{t,i_t}|\mathcal{H}_{t-1}]\right)& = \sum_{t=1}^T \left(\mathbb{E}[r_{t,i^*}]-\overline{r}_{t,i^*}\right)\\
    & +\sum_{t=1}^T \left(\overline{r}_{t,i^*}-\overline{r}_{t,i_t}\right)+ \sum_{t=1}^T \left(\overline{r}_{t,i_t}-\mathbb{E}[r_{t,i_t}|\mathcal{H}_{t-1}]\right)\\
    & \le \sum_{t=1}^T \left(\overline{r}_{t,i_t}-\mathbb{E}[r_{t,i_t}|\mathcal{H}_{t-1}]\right)\\
    & \le 2\sum_{t=1}^T \left(6\sqrt{\frac{\ln(TM\delta^{-1})}{\sum_{j\in \mathcal{N}(i_t)}n_t(j)}}+\epsilon\right),
\end{align*}
where we used the fact that $i_t\in \arg\max_{i\in \mathcal{M}}\overline{r}_{t,i}$, and that with probability at least $1-\delta$ $\mathbb{E}[r_{t,i}]-\overline{r}_{t,i}\le 0$ for all $t\in [T]$, for all $i\in \mathcal{M}$. 
Therefore, aside from the addittive term $\mathcal{O}(T\cdot \epsilon)$, specific of our setting, the confidence intervals shrinking is equivalent to the one in the algorithm UCB-N proposed by \cite{lykouris2020feedback}, and following the same analysis we have

\begin{align*}
    R_T \le \mathcal{O}\left(\ln(TM\delta^{-1})\sqrt{\alpha(\mathcal{R})T} + 2\epsilon \,T\right).
\end{align*}



\end{proof}

\end{document}